
\documentclass[12pt]{article}
\usepackage{enumitem}

\usepackage{graphicx}
\usepackage{epstopdf}
\usepackage{placeins}
\usepackage{subfigure}
\usepackage{algpseudocode}
\usepackage{algorithmicx}
\usepackage[boxruled,algosection]{algorithm2e}

\usepackage{url}
\usepackage[table]{xcolor}
\newcolumntype{L}[1]{>{\raggedright\let\newline\\\arraybackslash\hspace{0pt}}m{#1}}
\newcolumntype{C}[1]{>{\centering\let\newline\\\arraybackslash\hspace{0pt}}m{#1}}
\newcolumntype{R}[1]{>{\raggedleft\let\newline\\\arraybackslash\hspace{0pt}}m{#1}}

\usepackage{amsmath}
\usepackage{amsfonts}
\usepackage{amsthm}
\usepackage{cite}

\usepackage{geometry}

\geometry{margin=4cm}

\newtheorem{theorem}{Theorem}
\newtheorem{definition}{Definition}
\newtheorem{example}{Example}

\usepackage[affil-it]{authblk} 
\usepackage{etoolbox}
\usepackage{lmodern}
\bibliographystyle{ieeetr}

\makeatletter
\patchcmd{\@maketitle}{\LARGE \@title}{\fontsize{16}{19.2}\selectfont\@title}{}{}
\makeatother

\title{Learning with fuzzy hypergraphs: a topical approach to query-oriented text summarization}
\author{Hadrien Van Lierde and Tommy W. S. Chow\\
	Department of Electronic Engineering, City University of Hong Kong\\
	83 Tat Chee Av., Kowloon Tong, Hong Kong, China\\
	hadrien.vanlierde@hotmail.com, eetchow@cityu.edu.hk\\
	}
\date{}

\begin{document}
\footnotesize\noindent\textit{This is the unrefereed Author's Original Version (or pre-print Version) of the article. The present version is not the Accepted Manuscript. The publication details of the manuscript are the following: H. Van Lierde, T.W.S. Chow, Learning with fuzzy hypergraphs: A topical approach to query-oriented text summarization, \textit{Information Sciences}, 496 (2019), 212-224, \url{https://doi.org/10.1016/j.ins.2019.05.020}.}
{\let\newpage\relax\maketitle}
\maketitle
\begin{abstract}

Existing graph-based methods for extractive document summarization represent sentences of a corpus as the nodes of a graph or a hypergraph in which edges depict relationships of lexical similarity between sentences. Such approaches fail to capture semantic similarities between sentences when they express a similar information but have few words in common and are thus lexically dissimilar. To overcome this issue, we propose to extract semantic similarities based on topical representations of sentences. Inspired by the Hierarchical Dirichlet Process, we propose a probabilistic topic model in order to infer topic distributions of sentences. As each topic defines a semantic connection among a group of sentences with a certain degree of membership for each sentence, we propose a fuzzy hypergraph model in which nodes are sentences and fuzzy hyperedges are topics. To produce an informative summary, we extract a set of sentences from the corpus by simultaneously maximizing their relevance to a user-defined query, their centrality in the fuzzy hypergraph and their coverage of topics present in the corpus. We formulate a polynomial time algorithm building on the theory of submodular functions to solve the associated optimization problem. A thorough comparative analysis with other graph-based summarization systems is included in the paper. Our obtained results show the superiority of our method in terms of content coverage of the summaries.\\
\noindent\textbf{keywords:} Automatic Text Summarization, Fuzzy Graphs, Probabilistic Topic Models, Hierarchical Dirichlet Process, Personalized PageRank, Submodular Set Functions
\end{abstract}


\section{Introduction}\label{introSection}

The rapid expansion of the Internet led to a substantial increase in the amount of publicly available textual resources in recent years. The availability of information in the form of online documents such as news articles or legal texts facilitates decision processes in fields ranging from finance to legal matters. Automatic text summarization speeds up the process of information extraction by automatically producing summaries of large corpora. While early methods were restricted to the summarization of single documents, recent approaches focused on the more realistic problem of multi-document summarization \cite{nenkova2011}. Similarly, the interest has evolved from generic towards query-focused summarizers, which produce summaries with the information relevant to a query formulated by the user.

While an abstractive summarizer generates an abstract of a corpus based on natural language generation, extractive summarizers produce summaries by extracting and aggregating relevant sentences of the original corpora. The large majority of algorithms build on the extractive approach since it focuses on the design of sentence ranking functions that score sentences in terms of relevance and it does not require extensive Natural Language Processing. Among these algorithms, graph-based summarizers have proved to outperform feature-based methods in various experiments \cite{nenkova2011} due to their ability to capture the global structure of connections between sentences of a corpus in the calculation of sentence scores. In their simplest form, graph-based summarizers first define a graph in which vertices are sentences and edges represent pairwise lexical similarities between sentences, namely similarities based on the number of words sentences have in common. Then sentence scores are obtained by applying popular graph-based ranking algorithms such as PageRank \cite{R17} or HITS algorithm \cite{wan2008}. Recently graph-based summarizers were proposed to address the subtask of query-focused summarization. A popular graph-based sentence ranking method to address this problem is the so-called personalized PageRank algorithm which introduces a query bias in the probabilities of transition between sentences and, in turn, scores sentences in terms of both their centrality in the graph and their relevance to the query \cite{R17}. Since a simple graph consisting of pairwise connections among sentences is unable to model complex collective relationships among multiple sentences, hypergraph models were also proposed \cite{wanng2013,xiong2016}, which capture groups of lexically similar sentences and then apply hypergraph extensions of ranking algorithms.

Two limitations of existing graph- and hypergraph-based algorithms alter their summarization capabilities: the \textit{semantic} limitation and the lack of \textit{topical diversity}. First, the calculation of similarities between sentences is generally based on the co-occurrence of terms in sentences (lexical similarity) rather than their \textit{semantic} relatedness \cite{wan2008,R7}. However, two sentences with no or few words in common might still refer to the same topic or have a similar meaning in the context of a specific corpus, as shown by the following example. 

\begin{itemize}
\itemsep0em
\item [--] \textit{After landing, the airplane slowly moved on the track until it stopped at its parking place.}
\item [--] \textit{The aircraft reached a designated area and the passengers got off.}
\end{itemize}

Although they provide slightly different pieces of information, both sentences are semantically related as they share semantically related terms. However, they do not have any word in common, except stopwords. The sentence graph or hypergraph should ideally capture such semantic relationships among sentences. Indeed, since the graph construction has a significant impact on the sentence scores, neglecting semantic relationships among sentences alters the quality of the final summary. Attempts to incorporate higher order relationships among sentences include the detection of clusters of lexically similar sentences, namely groups of sentences with a large number of words in common \cite{wan2008,wanng2013,zhang2012,cai2013}. Although these cluster-level relationships can capture semantic similarities to some extent, they do not attempt to detect sets of semantically related terms or topics. As a result, they fail to capture pairwise semantic similarities between sentences when they use very different wordings, as in the example above.

Second, most systems include a greedy sentence selection method for redundancy removal in which sentences are considered redundant only if they have words in common \cite{xiong2016}. Other methods include methods simultaneously maximizing relevance and minimizing redundancy \cite{yin2015,lin2010} and methods based on the detection of dominating sets \cite{shen2010}. These different approaches build on lexical similarities between sentences as a measure of their redundancy. However, as shown in the example above, lexically dissimilar sentences might still be semantically related. Hence, with existing algorithms of redundancy removal, the resulting summary might consist of sentences that refer to the same topic and fail to cover all major topics of the given corpus. A new approach is thus needed to enforce \textit{topical diversity} in summaries instead of removing lexical redundancies.

To address the \textit{semantic} limitation of existing systems, we propose to capture semantic relationships among sentences making use of a probabilistic topic model called the Hierarchical Dirichlet Process, which was originally designed for the detection of topics in corpora of documents \cite{hdp}. We adapt the model for the inference of sentence topics. The model inference is based on Gibbs sampling. The model infers topics as groups of semantically related terms in the given corpus, and it labels each sentence with multiple topic tags and associated topic weights. Since each topic connects a group of semantically related sentences and since the importance of each topic in a sentence is weighted, we model sentences as a fuzzy hypergraph, namely an extension of hypergraphs in which hyperedges are fuzzy subsets of the set of nodes. In our fuzzy hypergraph model, nodes are sentences, fuzzy hyperedges are topics and the weights of a topic in each sentence define its distribution over vertices. As it involves topical relationships, this fuzzy hypergraph captures the semantic similarities of sentences.

A recent idea proposed in \cite{xiong2016} shares some similarities with our approach as it also incorporates topics inferred by a topic model in a hypergraph-like structure. They cluster sentences based on their topical representations and the resulting disjoint communities are modelled as crisp and disjoint hyperedges of a hypergraph instead of fuzzy hyperedges. Modelling semantic similarities as non-overlapping clusters in such a way fails to capture the multiplicity of topics covered by sentences.

To address the issue of \textit{topical diversity}, we propose a new sentence selection approach based on our fuzzy hypergraph. This approach produces a summary by extracting the sentences maximizing \textit{Relevance} and \textit{Topical Coverage}. The \textit{Relevance} of individual sentences express both their similarity with the query and their centrality in the corpus. Relevance scores are computed through an extension of Personalized PageRank algorithm for our fuzzy hypergraph. The \textit{Topical Coverage} of a set of sentences expresses the multiplicity and diversity of topics covered by these sentences. Our definition of Topical Coverage is based on an extension to our fuzzy hypergraph of dominating set problem \cite{garey2002}. Hence, instead of removing lexical redundancies, we intend to improve the topical diversity of our summary, which is more consistent with the goal of covering all major topics of a given corpus. Relevance and Topical Coverage are combined into a discrete optimization problem for sentence selection. As the problem is shown to be NP-hard, we formulate an approximation algorithm with a relative performance guarantee. The algorithm is based on the theory of submodular functions. This core algorithm of sentence selection is called \textit{Maximum Relevance and Coverage} (MRC) algorithm. The final summary is obtained by aggregating the selected sentences.

The main contributions of this paper are the following: (1) a new fuzzy hypergraph model capturing semantic relationships among sentences of a corpus inferred by a probabilistic topic model, (2) a multi-objective optimization problem expressing the sentence selection process as the maximization of Relevance of individual sentences and Topical Coverage of the resulting summary and (3) a polynomial time algorithm building on the theory of submodular functions for solving the optimization problem and generating informative and semantically diverse summaries.

The structure of the paper is as follows. In section \ref{relatedWorks}, we present summarization algorithms related to ours. In section \ref{overallSection}, we present an overview of our system. In section \ref{mainSection}, we present each step of our framework including the topic modelling, the fuzzy hypergraph construction and the sentence selection. Finally, in section \ref{experimentSection}, we present experimental results demonstrating the superiority of our approach over state-of-the-art summarizers on real-world datasets. 

\section{Related work}\label{relatedWorks}
Extractive summarizers aggregate important sentences in a corpus while abstractive summarizers generate new summaries after identifying important information \cite{nenkova2011}. As abstractive summarization requires extensive Natural Language Processing, most summarizers to date are based on extractive approaches.

Methods of extractive summarization generally fall into two categories, namely feature-based and graph-based approaches. Feature-based methods train a model to predict the score of each sentence based on feature representations of sentences (term frequency, sentence position \cite{nenkova2011}, etc.). Graph-based methods define graphs in which nodes are sentences and edges represent similarities between sentences. Sentence scores are then given by node centrality measures on the graph \cite{R17,R7}. The advantages of graph-based summarization over feature-based summarization are that it does not require labelled corpora, and it is based on the global structure of links between sentences of the corpus rather than local features.

The earliest graph-based summarizer, called LexRank \cite{R7}, defines edges as term co-occurrence relationships between sentences. Then, PageRank algorithm is applied to compute relevance scores of sentences. Adapting this idea for the task of query-focused summarization, topic sensitive LexRank \cite{R17} introduces a query bias in probabilities of transition, which results in higher scores for sentences that are similar to the query. Similarly, \cite{wan2013} proposes a manifold ranking algorithm in which scores are popagated accross a graph including both sentences and the query as vertices. To remove redundancies in summaries, \cite{mei2010} proposes a new node ranking algorithm called DivRank, which tends to select dissimilar sentences. While early graph-based algorithms only involved sentences, a bipartite graph model is proposed in \cite{wan2008}, involving both sentences and terms as vertices and it applies HITS algorithm to score sentences. \cite{R30} combines this idea with a PageRank-like method to score sentences, terms and documents simultaneously.

While early methods build sentence graphs based on co-occurrence of terms in sentences only, later approaches infer higher level relationships. These methods include sentence clusters in the graph construction, namely groups of similar sentences. In that perspective, \cite{wan2008} builds a bipartite graph in which vertices consist of both sentences and clusters, and edges represent lexical similarities between sentences and clusters. HITS algorithm is applied to score both sentences and clusters. A similar idea presented in \cite{zhang2012} incorporates terms as a third class of vertices. While these algorithms only discover clusters of lexically coherent sentences using standard clustering algorithms, \cite{cai2013} suggests that scores of sentences within each community should be quite different from each other. Wang et al. presents an alternative way to incorporate higher level connections among sentences \cite{wanng2013}: they build a hypergraph in which nodes are sentences and hyperedges represent clusters. Then, sentence scores are computed based on semi-supervised learning over hypergraphs. Although this hypergraph models relationships that are more complex than pairwise, their method is limited to disjoint sentence clusters which results in binary and non-overlapping hyperedges. Hence, the hypergraph poorly models the multiplicity of topical relationships among sentences.

In contrast, several summarizers propose to build on topic models rather than clusters, namely to infer a set of topics for a given corpus, each topic being modelled as a distribution over terms. When applied in the context of text summarization, each sentence is tagged with multiple topics, which better models the multiple information carried by sentences. Popular topic modelling algorithms include Probabilistic Latent Semantic Analysis (PLSA), Latent Dirichlet Allocation (LDA) and the Hierarchical Dirichlet Process (HDP). \cite{Hennig2009} computes the similarity of sentences with a user-defined query based on PLSA. Going beyond PLSA, \cite{arora2008} extracts topic distributions of sentences based on LDA and, for each topic, it selects the sentence with highest associated probability. While LDA overcomes PLSA's tendency to overfit by setting a Dirichlet prior on the distribution of documents over topics, the number of topics must be determined by cross validation. In contrast, the topic model present in our system is based on HDP, which automatically infers the number of topics by incorporating Dirichlet Processes as nonparametric priors for topics \cite{hdp}. Moreover, the hierarchical structure of HDP allows us to infer both sentence and document topics simultaneously.

A hypergraph model similar to ours was presented recently in \cite{xiong2016} which uses HDP to compute sentence embeddings. Then, sentence clusters are extracted by applying a standard clustering algorithm to these sentence embeddings. These non-overlapping clusters define binary and disjoint hyperedges that do not capture the multiplicity of topics covered by a sentence, which can only be captured by overlapping and fuzzy hyperedges as the ones present in our model.

Building on fuzzy set theory, fuzzy graphs associate each node with a degree of membership in each edge \cite{fuzzy}. Relaxing the assumption of pairwise relationships, fuzzy hypergraphs are defined by a set of nodes and a set of fuzzy subsets of these nodes. Applications of fuzzy hypergraphs include portfolio management and managerial decision making \cite{fuzzy,bershtein2009}. To our knowledge, fuzzy hypergraphs have not yet been used for text mining purposes, including text summarization. Fuzzy hypergraphs are used to incorporate topical information in our summarizer.

After sentence scoring, a critical step is to select highly scored sentences that are not redundant. A popular method is the greedy method of redundancy removal which selects dissimilar sentences with highest scores \cite{xiong2016}. As this method may favour long sentences, multi-objective approaches were proposed in order to maximize the sum of relevance scores of selected sentences and simultaneously minimize their redundancy \cite{yin2015,lin2010}. However, their definition of redundancy is limited to lexical similarities. Other methods include the one in \cite{shen2010}, which selects sentences by solving the dominating set problem over the sentence graph. However, their algorithm also tends to favour long sentences over short ones and it fails to model semantic relationships captured by topics. In general, existing methods of redundancy removal are merely based on lexical similarities between sentences which does not prevent semantic redundancies in the final summary. In contrast, our approach based on Topical Coverage selects sentences covering the main topics of the corpus, which automatically reduces their semantic redundancy.

\section{Problem statement and system overview}\label{overallSection}
The problem we intend to solve is that of query-oriented multi-document summarization, namely the production of a summary containing the most important information found in a given corpus and that is also relevant to a user-defined query. This is done by extracting and aggregating relevant sentences from the corpus. We provide a definition of the query-oriented summarization task.

\begin{definition}[Query-oriented summarization problem]
Given a corpus of documents consisting of a set $V$ of sentences, the set $\{l(s):s\in V\}$ of sentence lengths, a summary capacity $L>0$ and a query represented by a sentence $q$, produce a summary $S$ in which $S\subseteq V$ is a set of selected sentences that are relevant to $q$ and contain the essential information of $V$, such that the capacity constraint $\underset{s\in S}{\sum}l(s)\leq L$ is satisfied. 
\end{definition}

\begin{figure}[!h]
\centering
\includegraphics[width=0.95\textwidth]{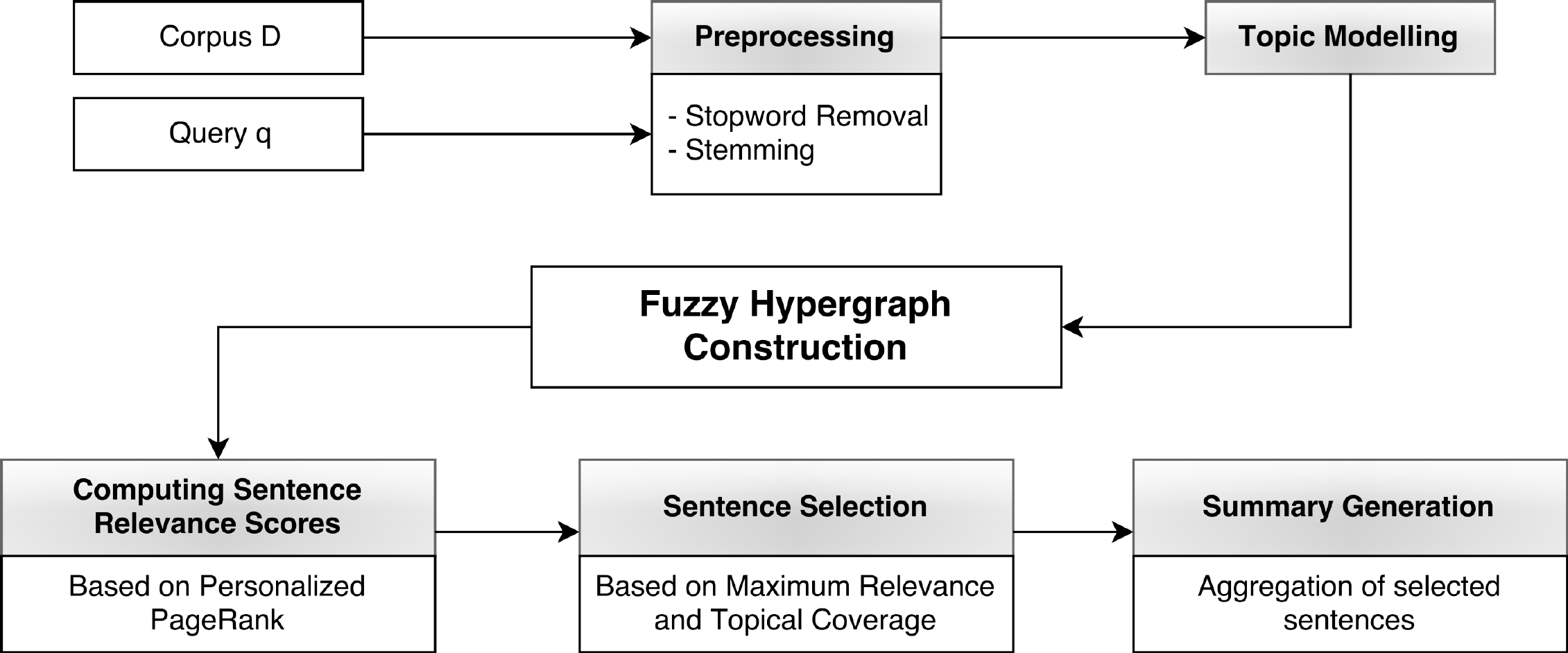}
\caption{System Chart}
\label{systSummary94}
\end{figure}

Hence, we refer to a summary as the set $S$ of selected sentences. The prescribed summary length is the so-called \textit{capacity} of the summary. Our MRC algorithm consists of the following steps which are summarized in figure \ref{systSummary94}.

\begin{enumerate}
\itemsep0em
\item Preprocessing: standard preprocessing steps for sentence vectorization,
\item Topic detection based on the Hierarchical Dirichlet Process,
\item Fuzzy hypergraph definition in which nodes are sentences and fuzzy hyperedges are defined by topics,
\item Computation of sentence relevance scores based on a PageRank-like algorithm over the fuzzy hypergraph followed by the selection of sentences through the maximization of Relevance and Topical Coverage,
\item Generation of the summary by aggregating the selected sentences.
\end{enumerate}

In subsequent sections, we refer to the set of terms of a corpus as the set of distinct words appearing at least once in the corpus. 

\section{Maximizing Relevance and Topical Coverage based on a sentence fuzzy hypergraph}\label{mainSection}

We describe each step of our MRC algorithm in details, including preprocessing, topic modelling, fuzzy hypergraph construction and sentence selection through the maximization of sentence relevance and topical coverage.

\subsection{Preprocessing}\label{preprocSection}

We apply standard preprocessing methods in text mining including stopword removal based on a publicly available list of $153$ English stopwords \cite{stopWords} and word stemming using Porter Stemmer \cite{porter2001}. We let $N_t$ represent the number of distinct terms in the corpus after these preprocessing operations are completed.

\subsection{Topic inference}\label{topicSection}
As mentioned in sections \ref{introSection} and \ref{relatedWorks}, traditional graph-based summarization algorithms only take into account the co-occurrence of terms between sentences. However, in order to capture the semantic similarity between sentences, we must go beyond term co-occurrences and capture the information overlap between sentences. This can be done by extracting the different topics present in the corpus and incorporating topical similarities between sentences. In the field of text mining, a topic is a set of terms referring to the same subject in the context of a document or a corpus. Topic inference refers to the joint tasks of discovering these sets of related terms and inferring topic tags for textual units (documents, sentences or words). For instance, the following sentences refer to semantically related objects (pastures and meadows) although they have few words in common.

\begin{example}\label{meadowsExample}
\normalfont
Definitions of pastures and meadows in Cambridge Dictionary \cite{cambridge}:
\textit{
\begin{enumerate}
\itemsep0em
\item A pasture consists of grass or similar plants suitable for animals such as cows and sheep to eat, or an area of land covered in this,
\item Meadows are fields with grass and often wild flowers in them.
\end{enumerate}}
\end{example}

Both sentences in example \ref{meadowsExample} cover a topic related to \textit{nature} or \textit{countryside} and they could be considered as semantically similar. Existing methods of topic inference are generally based on the detection of terms that consistently occur together in the same documents within the corpus. Such sets of terms are considered as referring to the same topic. Previous attempts to incorporate topical information in automatic text summarization were generally based on methods of matrix factorization such as latent semantic analysis (LSA), which lacks the ability to discover interpretable topics, or its probabilistic version (PLSA), which inevitably leads to overfitting \cite{hdp}. More recent probabilistic topic models describe the process of generation of documents from topics represented as distributions over terms. Among these methods, Latent Dirichlet Allocation was already used for the purpose of summarization. However, a major drawback of this method is the necessity of selecting the number of topics manually. Hence, we rather rely on the Hierarchical Dirichlet Process, which is a probabilistic topic model that is capable of inferring the number of topics automatically.

The Hierarchical Dirichlet Process (HDP) is a mixture model with hidden number of components that builds on the Dirichlet Process (DP). The Dirichlet Process itself can be viewed as a distribution over a set of discrete probability measures with infinite support \cite{hdp} which verifies the following property
\begin{equation}\label{eqnStickBreak}
\text{If }G\sim DP(\gamma,H)\text{ then, with probability 1, }G=\sum_{k=1}^{\infty}\beta_k\delta_{\phi_k}
\end{equation}
where $\gamma$ is a positive parameter, $H$ is a prior distribution on components, $\beta_k$'s are the so-called stick breaking weights and $\phi_k$'s are atoms drawn from $H$. Hence, the Dirichlet Process can be viewed as a measure on measures which extracts a countable infinite number of atoms from a prior distribution. In the context of topic modelling of documents, $H$ is selected to be a $N_t$-dimensional Dirichlet distribution and a draw $G$ of $DP(\gamma,H)$ extracts a countable infinite set of $N_t$-dimensional probability vectors $\phi_k$. Each $\phi_k$ is a vector of probabilities over terms which can be viewed as a topic. 

The original version of HDP is a generative model meant to infer topics of documents within a corpus. Given a set of $N_D$ documents consisting of $N_t$ distinct terms, each document $j$ is represented as a sequence of $n_j$ words $w_{1j},...,w_{n_jj}$ drawn from the $N_t$ terms. The goal is to infer a finite number $K$ of topics in the form of probability distributions over terms $\phi_1,...,\phi_K\in [0,1]^{N_t}$, and a topic tag $z_{lj}\in \{1,...,K\}$ for each word $l$ in document $j$. HDP models the generation of each word from hidden topic vectors $\{\phi_1,...,\phi_k\}$ in the following way.

\begin{enumerate}
\itemsep0em
\item Draw a global measure at a corpus-level $G_0|\gamma,H\sim DP(\gamma,H)$, where the prior distribution $H$ is often chosen as a $N_t$-dimensional symmetric Dirichlet distribution $\text{dir}(\zeta\frac{\mathbf{1}_{N_t}}{N_t})$ in which $\mathbf{1}_{N_t}$ is the $N_t$-dimensional vector of ones. This distribution is the conjugate prior of the categorical distribution and allows a straightforward inference based on Gibbs sampler \cite{hdp}. Parameter $\gamma$ commands the lever of variability of $G_0$ around prior $H$.
\item For $j$-th document:
\begin{enumerate}
\itemsep0em
\item draw a document-specific measure $G_j|\alpha,G_0\sim DP(\alpha,G_0)$,
\item for word $l$ in document $j$:
\begin{itemize}
\itemsep0em
\item draw a distribution over terms $\theta_{lj}|G_j\sim G_j$,
\item draw a term $w_{lj}|\theta_{lj}\sim \text{cat}(\theta_{lj})$, where $\text{cat}(\theta_{lj})$ is the categorical distribution over terms, with probabilities given by vector $\theta_{lj}$.
\end{itemize}
\end{enumerate}
\end{enumerate}
Each draw from a Dirichlet Process extracts a countable infinite number of atoms from a base distribution. Starting from a Dirichlet distribution $H=\text{dir}(\zeta\frac{\mathbf{1}_{N_t}}{N_t})$, global measure $G_0$ draws a countable infinite set $S_0$ of vectors of probability distribution over terms. Each document-level measure $G_j$ extracts a subset $S_j$ of $S_0$. Finally each word $l$ of document $j$ first draws a vector probabilities over terms $\theta_{jl}$ from set $S_j$, and then a term $w_{lj}\sim \text{cat}(\theta_{lj})$. The weight associated to each atom of global and document-level measures is given by stick-breaking weights \cite{hdp}, as suggested by equation \ref{eqnStickBreak}. Due to the discrete nature of measures $G_0$ and $G_j$, distributions $\theta_{lj}$ are naturally shared within documents and within the corpus, with several words being associated to the same distribution over terms. The extent to which the term distributions are shared within documents is commanded by concentration parameter $\alpha$, and within the whole corpus by concentration parameter $\gamma$. The larger these parameters, the larger the number of distinct vectors $\theta_{lj}$ that are generated. In order to extract topic representations and topic tags one can extract the set $\{\phi_1,...,\phi_K\}$ of $K$ distinct vectors $\theta_{lj}$ across all documents and relabel topic tags such that word $l$ in document $j$ is associated to a tag $z_{lj}\in\{1,...,K\}$ so that $\theta_{lj}=\phi_{z_{lj}}$. It is important to note that, opposite to LDA, the number $K$ of topics is inferred and it is not a parameter of the model. Once topic representations are learnt on a training set, topic tags for new previously unseen documents can be predicted.

The model inference can be done by first sampling topic tags and topic representations based on Gibbs sampler and then by extracting Maximum A Posteriori estimators of topic representations $\{\phi_e:1\leq e\leq K\}$ and topic tags $\{z_{lj}:1\leq j\leq N_D,1\leq l\leq n_j\}$ \cite{hdp}. If a Dirichlet distribution is chosen as a prior $H$, which is conjugate to the categorical distribution, Gibbs sampling equations are derived in a straightforward way based on the Chinese Restaurant Franchise model presented in \cite{hdp}. Finally, the set of topic labels $\{z_{lj}:1\leq l\leq n_j\}$ for document $j$ can be interpreted as a set of topic tags and the semantic similarity between two documents can be computed based on the number of topics they have in common.

\begin{figure}[!h]
\centering
\includegraphics[width=.8\textwidth]{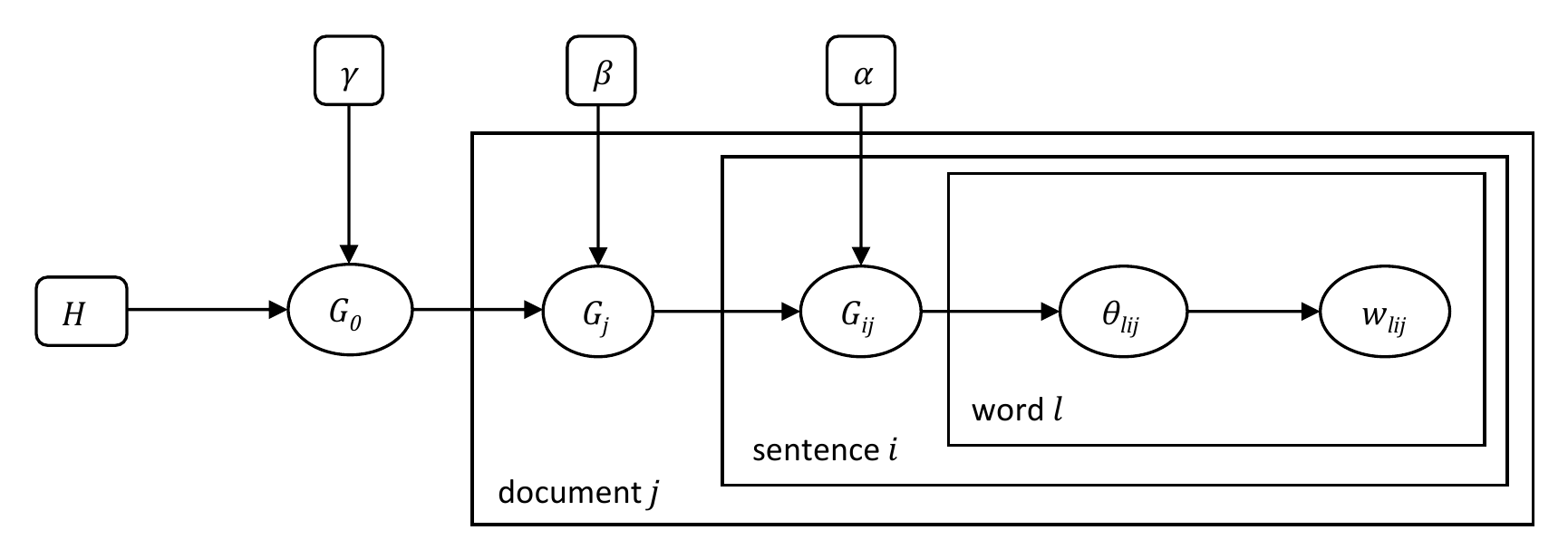}
\caption{Representation of two-level Hierarchical Dirichlet Process.}
\label{2levelHDP}
\end{figure} 

In the context of graph-based extractive text summarization, since we are interested in the computation of semantic similarities between sentences, we need to extract topic tags for sentences instead of entire documents. Several previous studies \cite{arora2008,xiong2016} proposed to do so by first extracting topic tags $\{z_{lj},1\leq l\leq n_j\}$ for each document $j$ using HDP or LDA and then, for a sentence consisting of the subsequence of words $w_{l_1j},...,w_{l_sj}$ of document $j$, topic tags of the sentence are given by the corresponding subset of tags $\{ z_{l_1j},...,z_{l_sj}\}$ \cite{xiong2016,arora2008}. However, as can be seen from our description of HDP model above, an important assumption of it is the so-called \textit{exchangeability} assumption which neglects word ordering in documents. Documents are thus regarded as bag-of-words. Due to this exchangeability assumption, the partitioning of words into sentences is not taken into account in the model. Hence, merely defining sentence topic tags as a subset of document tags neglects the topical information jointly carried by words within a sentence. It is thus not guaranteed to produce coherent topic tags for each sentence. In particular, all words within a sentence could be assigned different topics. It would thus be desirable to also encourage topics to be shared by words within a sentence, in order to properly capture the semantics of a sentence as a whole. This can be done by extending the model above with a two-level HDP, both at the document and sentence levels, as depicted in figure \ref{2levelHDP}. In our model, the process of generation of word $l$ in sentence $i$ of document $j$ is as follows.

\begin{enumerate}
\itemsep0em
\item Draw a global measure $G_0|\gamma,H\sim DP(\gamma,H)$,
\item for each document $j$:
\begin{enumerate}
\itemsep0em
\item draw a document-specific measure $G_j|\beta,G_0\sim DP(\beta,G_0)$,
\item for each sentence $i$ in document $j$:
\begin{itemize}
\itemsep0em
\item draw a sentence-specific measure $G_{ij}|\alpha,G_j\sim DP(\alpha,G_j)$,
\item for each word $l$ in sentence $i$ of document $j$:
\begin{itemize}
\itemsep0em
\item draw a distribution over terms $\theta_{lij}\sim G_{ij}$,
\item draw a term $w_{lij}\sim \text{cat}(\theta_{lij})$.
\end{itemize}
\end{itemize}
\end{enumerate}
\end{enumerate}

The two-level HDP ensures that topics are shared across documents, across sentences and within sentences. In such context, the closer two words are in the hierarchy of corpus, documents and sentences, the more likely they are to fall in the same topic. Such two-level HDP was already proposed in another context, namely for the inference of topic tags for documents belonging to several corpora, with draws from Dirichlet Processes both at corpus and at document level. However, to the best of our knowledge, this model was not proposed for the inference of sentence topics within a corpus of documents. For the inference of topic tags and topic distributions over terms in our two-level HDP, the Gibbs sampler of \cite{hdp} is used, in which such two-level HDP is also introduced with both corpus and document levels. We refer to \cite{hdp} for the set of sampling equations for the inference of topic tags and topic distributions over terms based on a Markov Chain Monte Carlo algorithm. The computational complexity of each pass of Gibbs sampling algorithm for HDP is proportional to the corpus length. In practical applications involving hundreds of documents of the size of a newspaper article, the convergence of the algorithm is fast compared to the computation of sentence relevance scores presented next \cite{wang2012}.

After completing the inference, we obtain the quantities below:

\begin{itemize}
\itemsep0em
\item a number $K$ of topics represented by distributions over terms: for $1\leq e\leq K$, the distribution of topic $e$ over terms is
\begin{equation}\label{probaTopicTerm}
\phi_e\in [0,1]^{N_t}
\end{equation}
where $N_t$ is the number of distinct terms in the corpus and $\phi_{et}$ is the probability of observing term $t$ under topic $e$;
\item for $1\leq j\leq N_s$, $1\leq i\leq n_j$, the topic tag of word $l$ in sentence $i$ of document $j$ is represented by variable $z_{lij}\in\{1,...,K\}$.
\end{itemize}

As we choose $H$ to be a Dirichlet prior $\text{dir}(\zeta\frac{\mathbf{1}_{N_t}}{N_t})$, there are four dispersion parameters in this topic modelling step, namely $\alpha$, $\beta$, $\gamma$ and $\zeta$. Experiments presented in section \ref{experimentSection} estimate suitable ranges of values for these parameters.

\subsection{Fuzzy hypergraph definition}\label{fuzzygraphSection}
A hypergraph $H=(V,E)$ over a set $V$ of vertices is a generalization of graph in which each hyperedge in $E$ is a subset of $V$ \cite{wanng2013}. In existing hypergraph-based summarizers \cite{wanng2013,xiong2016}, vertices are sentences and clusters of sentences correspond to hyperedges which do not overlap. There is also no attempt to model the degree of membership of each sentence in each hyperedge. This model is unsatisfactory since each sentence may cover multiple topics, and each topic is covered by a sentence with a different degree depending on the number of words of the sentence tagged with this topic. To overcome these limitations, we model sentences as a fuzzy hypergraph, namely a generalization of hypergraph in which hyperedges are defined as fuzzy subsets of the set of nodes. Fuzzy hypergraphs provide accurate models of networks in which agents participate in each connection with a certain degree \cite{fuzzy}. A formal definition of fuzzy hypergraph is given below\footnote{This definition of fuzzy hypergraph is an adaptation of the one in \cite{fuzzy}, in which the degrees of membership of vertices in a hyperedge are normalized to represent a distribution over vertices.}.  

\begin{definition}[Fuzzy Hypergraph]
A fuzzy hypergraph is defined as a quadruplet $G=(V,E,\psi,w)$ on a set $V$ of vertices and a set $E$ of hyperedges such that
\begin{itemize}[leftmargin=*]
\itemsep0em
\item [--] $\psi\in [0,1]^{|E|\times |V|}$ is a matrix that defines a distribution over vertices for each of the $|E|$ hyperedges, verifying
$\underset{i\in V}{\sum}\psi_{ei}=1\text{ for }e\in E$ and $\underset{e\in E}{\sum}\psi_{ei}>0\text{ for }i\in V$,
\item [--] a positive weight $w(e)\in\mathbb{R}^+$ for each hyperedge $e\in E$.
\end{itemize}
\end{definition}

By analogy with the non-fuzzy case, matrix $\psi$ defines the incidence matrix of the fuzzy hypergraph. Each hyperedge defines a group relationship among nodes while the fuzziness of hyperedges allows to quantify the implication of each node in the relationship. In the context of our summarization method, we define a fuzzy hypergraph $G=(V,E,\psi,w)$ in which vertices are sentences and each fuzzy hyperedge represents a topic. The degree of membership of each sentence in a hyperedge is proportional to the number of words tagged with the corresponding topic in the sentence, namely
\begin{equation}
\psi_{ei}=\frac{|\{ l:z_{li}=e\}|}{|\{l:z_{lj}=e,1\leq j\leq N_s\}|}.
\end{equation}
For simplicity, we dropped document index $j$ and we denote by $z_{li}$ the topic of $l$-th word in $i$-th sentence. Unlike previous hypergraph-based approaches, we make the more realistic assumption that each sentence can belong to different semantic groups (i.e. topics) with a certain degree of membership in each group. Example \ref{exampleMultipleTopics} shows a sentence that refers to two topics. The sentence is thus semantically related to any other sentence referring to either topic.
\begin{example}\label{exampleMultipleTopics}
\normalfont The following sentence combines two distinct topics, the topic of \textit{studies} ("homeworks", "school", "exams") and the topic of leisure ("friends", "park", "football", "played"): \textit{"After he finished his homeworks and got prepared for his school exams, the boy met with his friends in the park and they played football."}
\end{example}

Next, we define the weight $w(e)$ of a fuzzy hyperedge $e$ based on the discriminatory power of terms present in the corresponding topic, which depends on four aspects described below. These four term-based factors along with a factor measuring the relevance of topics within the corpus are combined to form hyperedge weights. This method differs from earlier models in which cluster weights were given by their lexical similarity with the entire corpus.

The \textit{in-corpus frequency} $\text{tfc}(t)$ of term $t$ in the corpus is the number of times term $t$ appears in the corpus. The \textit{sentence discriminatory power} $\text{isf}(t)$ of term $t$ is given by the logarithm of the inverse sentence frequency, as proposed in \cite{blake2006}
\begin{equation}\label{isfDef}
\text{isf}(t)=\log\left(\frac{N_s}{N_s^t}\right)
\end{equation}
where $N_s$ is the total number of sentences and $N_s^t$ is the number of sentences containing term $t$. Similar to $\text{idf}$ term weighting \cite{blake2006}, $\text{isf}$ weight is based on the idea that a term occurring in a large number of sentences carries less discriminatory information for the selection of the most relevant sentences. The \textit{in-topic frequency} $\text{tft}(t,e)$ of term $t$ in topic $e$ is the probability of encountering term $t$ conditioned on $e$ which is computed in the HDP inference process (equation \ref{probaTopicTerm}), i.e.
\begin{equation}
\text{tft}(t,e)=\phi_{et}.
\end{equation}
The \textit{topic discriminatory power} $\text{tdp}(t)$ of a term $t$ is based on the idea that a term $t$ appearing in relatively few topics should have a significant contribution to the semantics of sentences and topics while terms appearing in a large number of topics might have ambiguous meanings. We quantify the topic discriminatory power of a term $t$ by measuring the entropy of its distribution over topics:
\begin{equation}\label{EntropyOneEqn}
H(t)=-\underset{e}{\sum}p(e|t)\log(p(e|t))
\end{equation}
where $p(e|t)$ measures the fraction of occurrences of term $t$ in the corpus that are tagged with topic $e$. Then, the topic discriminatory power of $t$ is given by a shifted inverse of the entropy of this distribution
\begin{equation}
\text{tdp}(t)=\frac{1}{1+H(t)}
\end{equation}
which is equal to $1$ if $t$ is only tagged with a single topic in the whole corpus.

Finally the \textit{relevance} $\text{rel}(e)$ of topic $e$ is computed as
\begin{equation}
\text{rel}(e)=\text{f}(e)\log\left( \frac{N_s}{N_s^e}\right)
\end{equation}
where $\text{f}(e)$ is the number of occurrences of topic $e$ in the corpus and $N_s^e$ is the number of sentences in which topic $e$ occurs. The relevance $\text{rel}(e)$ of topic $e$ can be viewed as an adaptation of the term-frequency-inverse-sentence-frequency (tfisf) weights for weighting topics instead of terms \cite{blake2006}.

\begin{algorithm}[H]~\\
INPUT: for $1\leq e\leq K$, $\phi_e$, $N^e_s$ (number of sentences tagged with topic $e$) and $f(e)$; for $1\leq t\leq N_t$, $N_s^t$ (number of sentences containing term $t$); for $1\leq i\leq N_s$, topic tags $z_{wi}$ for each word $w$ in sentence $i$,\\
OUTPUT: Hypergraph $H(\{1,...,N_s\},\{1,...,K\},\psi,w)$\\
\textbf{for each} $t\in\{1,...,N_t\}$:\\
	\Indp Compute $\text{tfc}(t)$ and $H(t)$ (equation \ref{EntropyOneEqn})\\
	Let $\text{isf}(t)\leftarrow\log\left(\frac{N_s}{N_s^t}\right)$ and $\text{tdp}(t)\leftarrow\frac{1}{1+H(t)}$\\
	\textbf{for each} $e\in\{1,...,K\}$: $\text{tft}(t,e)\leftarrow\phi_{et}$\\
\Indm\textbf{for each} $e\in\{1,...,K\}$:\\
	\Indp $\text{rel}(e)\leftarrow\text{f}(e)\log\left( \frac{N_s}{N_s^e}\right)$\\
	$w(e)\leftarrow\text{rel}(e)\underset{t}{\sum} \text{tfc}(t)\text{isf}(t)\text{tft}(t,e)\text{tdp}(t)$\\
	\textbf{for each} $i\in\{1,...,N_s\}$: $\psi_{ei}\leftarrow\frac{|\{ l:z_{li}=e\}|}{|\{l:z_{lj}=e,1\leq j\leq N_s\}|}$\\
\caption{Fuzzy Hypergraph Construction}
\label{algorithmHYPER}
\end{algorithm}
\FloatBarrier

The weights of hyperedges are obtained by combining all the above scores:
\begin{equation}
w(e)=\text{rel}(e)\underset{t}{\sum} \text{tfc}(t)\text{isf}(t)\text{tft}(t,e)\text{tdp}(t).
\end{equation}
This definition yields a high weight for frequent topics including terms that occur a large number of times in the corpus, have strong discriminatory power over sentences and are not semantically ambiguous. As opposed to previous topic-based summarization algorithms \cite{xiong2016,arora2008}, we take advantage of the representation of topics as distributions over terms in order to compute the topic weights. Algorithm \ref{algorithmHYPER} summarizes the step of the fuzzy hypergraph construction. The computational complexity of the algorithm is $O(K(N_s+N_t))$ where $K$ is the number of topics.

\subsection{Relevance and Coverage Maximization for sentence selection}\label{scoreSection}
We present the consecutive steps of sentence scoring and selection. Based on the fuzzy hypergraph defined previously, we rank each sentence in terms of its relevance to the query and its centrality in the whole corpus. Then, we select a set of sentences maximizing individual Relevance and joint Topical Coverage.

\subsubsection{Computing relevance scores of sentences}
We introduce a ranking algorithm that computes scores for sentences according to their relevance to the user-defined query and their centrality in the corpus. Graph-based summarization algorithms rely in general on variations of PageRank algorithm for sentence ranking \cite{R7,R17}. The underlying assumption is that the generation of a coherent text from isolated sentences can be modelled as a Markov chain in which states are sentences and the probability of transition between two sentences depends on their similarity in some sense. Stationary probabilities provide the sentence ranks in the context of generic summarization. We extend this method by defining a random walk over fuzzy hypergraphs in which the transition probability between two vertices depends on the hyperedges shared by these vertices. The transition from vertex $i$ to another vertex is performed in two steps:
\begin{enumerate}
\itemsep0em
\item draw a hyperedge $e\in E$ with probability
$p(e|i)=\frac{p(i|e)w(e)}{\underset{f}{\sum}p(i|f)w(f)}=\frac{\psi_{ei}w(e)}{\underset{f}{\sum}\psi_{fi}w(f)}$,
\item draw a vertex $j$ in $V$ with probability $p(j|e)=\psi_{ej}$.
\end{enumerate}
Integrating out the hyperedges, we obtain the probability of transition
\begin{equation}\label{sentenceTransition}
p(j|i)=\underset{e}{\sum}p(j|e)\frac{p(i|e)w(e)}{\underset{f}{\sum}p(i|f)w(f)}
\end{equation}
from vertex $i$ to vertex $j$. The interpretation of this Markov chain over sentences is the following. Our goal is to generate a coherent sequence of sentences $s(1),s(2),...$ where $s(\tau)$ is the sentence produced by the Markov chain at time step $\tau$. By coherence, we mean that two consecutive sentences must be semantically related. The above transition between two sentences depends on two factors: first the co-occurrence of topics and the degree of membership of each sentence in the corresponding topics, and second the weight of the co-occurring topics.

With the above transition probabilities, the scores of sentences are the stationary probabilities computed by PageRank algorithm. However, as we intend to extract sentences that are both central in the corpus and relevant to a user-defined query, we adapt the formula proposed in \cite{R17} for query-focused text summarization. Given a measure of the probability of transition $p(j|q)$ from the query sentence $q$ to any sentence $j$, the query-biased probability of transition from $i$ to $j$ is
\begin{equation}\label{queryBiasedTransition}
p^q(j|i)=(1-\lambda)p(j|q)+\lambda p(j|i)
\end{equation}
where $\lambda\in [0,1]$ is called the \textit{query balance}, which commands the extent to which scores are learnt from the query relevance or from the propagation of scores across the fuzzy hypergraph. Transition probability $p^q(j|i)$ favours sentences that are similar to the query at each step of the Markov chain, where the query similarity is defined by $p(j|q)$. Equation \ref{sentenceTransition} cannot be used to compute the query relevance term $p(j|q)$, since it would require to infer topics for a potentially short query. To address this issue, we define the following query relevance measure:
\begin{equation}
p(j|q)=\underset{t}{\sum}\underset{e}{\sum} \psi_{ej}p(e|t)p(t|q)
\end{equation}
where $(\psi_{ej})_{\substack{1\leq e\leq K\\1\leq j\leq N_s}}$ is the  incidence matrix of the fuzzy hypergraph as defined in section \ref{fuzzygraphSection}, $p(e|t)$ measures the fraction of occurrences of term $t$ that are tagged with topic $e$ and $p(t|q)$ is the frequency of term $t$ in the query. With such query bias, sentences that are semantically similar to the query get increased probabilities of transition from other sentences, which ultimately results in higher scores for these sentences. This query relevance measure goes beyond the lexical similarity that is generally used in other systems \cite{wanng2013,xiong2016}. The final scores $\{p(i):1\leq i\leq N_s\}$ are obtained by applying PageRank iterative algorithm:
\begin{equation}\label{pagerank}
p^T(j)=(1-\mu)\frac{\mathbf{1}_{N_s}}{N_s}+\mu\sum_{\substack{i=1\\ i\neq j}}^{N_s}p^q(j|i)p^{T-1}(i)\text{, }T=1,2,...
\end{equation}
where $\mathbf{1}_{N_s}$ is a vector of ones and $\mu\in [0,1]$ is the so-called damping factor \cite{R7}. If $\mu>0$, the Markov chain is ergodic and the algorithm is guaranteed to converge to a unique vector $p$ with positive entries for any initial probability vector $p^0$ \cite{R7}.

\subsubsection{Sentence selection}
Relevance scores described in preceding section rank sentences in terms of relevance to the user-defined query and centrality in the corpus. These scores are further used to select sentences to be included in the summary while not exceeding the summary capacity. A straightforward approach is to select the sentences with maximal relevance scores whose aggregated length does not exceed the capacity, as suggested in early graph-based algorithms \cite{R7,R17}. However, this naive greedy algorithm might favour long sentences over shorter ones \cite{lin2010}. This is not desirable since a combination of shorter sentences may jointly achieve a higher relevance score. Another approach, referred to as \textit{Maximum Relevance} (MR), is to extract the subset $S$ of sentences maximizing the sum of relevance scores, namely
\begin{equation}\label{defMRproblem}
\underset{S\subseteq V}{\max}\text{  }\underset{s\in S}{\sum}p(s)
\text{, }\text{subject to }\underset{s\in S}{\sum}l(s)\leq L.
\end{equation}
A critical issue encountered with this sentence selection approach is that it assumes that the relevance of a summary equals the sum of the relevance scores of its sentences. However, highly scored sentences might exhibit a certain level of redundancy. Indeed, PageRank-like algorithms tend to produce high scores for nodes that are close to each other \cite{lin2010}. A qualitative explanation is that the stationary probability associated to a node is inversely proportional to its hitting time \cite{li2013}. As neighbours in a graph tend to achieve similar hitting times, their PageRank scores are close to each other. In our sentence-based fuzzy hypergraph, this translates into the fact that sentences sharing a large volume of topics achieve similar scores, which implies a certain level of redundancy in the summary.

To alleviate this redundancy issue, previous graph-based summarization algorithms selected sentences based on a \textit{Greedy Redundancy Removal} algorithm (GRR) \cite{xiong2016}. This greedy algorithm selects sentences to be included in the summary $S$ in decreasing order of scores provided that the similarity of each newly selected sentence with sentences already in $S$ does not exceed a predefined threshold. However, a shortcoming of this method is its failure to extract a set of sentences with maximum total relevance. Moreover, while it reduces the level of redundancy in the final summary, there is no guarantee that it properly covers all important topics of the corpus as can be seen from the following example.
\begin{example}\label{example1}
\normalfont
The five sentences below were extracted from a corpus of ten news articles related to the solar eclipse that occurred in U.S. on August 21, 2017\footnote{References to all articles of the corpus are provided in the supplemental materials.}.
\begin{enumerate}
\itemsep0em
\item \textit{"A total eclipse happens when the moon completely covers the sun."} \cite{eclipse4}
\item \textit{"A total eclipse of the sun happens when the moon completely blocks the visible solar disk, casting a shadow on Earth."} \cite{eclipse2}
\item \textit{"The eclipse will cross the U.S. from coast to coast, with totality visible from several major cities and other locations that are easily accessible to millions of people."} \cite{eclipse2}
\item \textit{"The main event will be visible from a relatively narrow path, starting in Oregon and ending in South Carolina."} \cite{eclipse2}
\item \textit{"Swathes of Europe will be able to enjoy a partial eclipse just before sunset."} \cite{eclipse5}
\end{enumerate}
\end{example}
Given the query "How and in what location will the total solar Eclipse occur?", the approximate relevance scores achieved are $2\times 10^{-2}$, $10^{-2}$, $3\times 10^{-3}$, $10^{-3}$ and $5\times 10^{-4}$, respectively. For a summary capacity of $45$ words, according to MR approach, the first two sentences should be selected. GRR method selects sentences $1$ and $3$ which are less redundant. However, sentences $1$, $4$ and $5$ constitute a more informative summary since it better covers the information present in the corpus related to the location from which the eclipse is visible. This example shows that the issue encountered when including redundant sentences in a summary is not the redundancy itself, but rather the fact that redundant sentences may jointly cover a lower amount of information than dissimilar sentences. With that new perspective in mind, we provide a definition of \textit{Topical Coverage} of a set of sentences based on our sentence fuzzy hypergraph. Qualitatively, our goal is to ensure that each sentence in the corpus is semantically similar to sentences in the summary or, in other words, that each sentence in the corpus shares a sufficient number of topics with the sentences in the summary. In probabilistic terms, we define the \textit{semantic relatedness} of a sentence $s$ to a set $S$ of sentences as the probability that a random walker starting in $s$ reaches $S$ in at most one step, with transition probabilities defined by equations \ref{sentenceTransition}. The Topical Coverage of a summary is the sum of the semantic relatedness to the summary of all sentences in the corpus. 
 
\begin{definition}[Topical Coverage]
Given a fuzzy hypergraph $G=(V,E,\psi,w)$, the Topical Coverage of a subset $S\subseteq V$ over $G$ is defined as
\begin{equation}\label{eqnCoverageX}
C(S)=|S|+\underset{\substack{j\in S\\ i\notin S}}{\sum}\underset{e}{\sum}\psi_{je}\frac{\psi_{ie}w(e)}{\underset{f}{\sum}\psi_{if}w(f)}.
\end{equation}
\end{definition}

As we mentioned, for each vertex $i$, the Topical Coverage of $S$ measures the semantic relatedness of $i$ to $S$, namely the probability that a random walker starting in $i$ can reach the set $S$ in no more than one step:
\begin{equation}
p(S|i)=\left\lbrace\begin{array}{ll}
\underset{j\in S}{\sum}\underset{e}{\sum}\psi_{je}\frac{\psi_{ie}w(e)}{\underset{f}{\sum}\psi_{if}w(f)} & \text{ if }i\notin S\\
1 & \text{ if }i\in S
\end{array}\right.
\end{equation}
and $C(S)$ can be rewritten as
\begin{equation}
C(S)=\underset{i\in V}{\sum}p(S|i).
\end{equation}
Hence, maximizing the Topical Coverage ensures that each sentence in the corpus is sufficiently similar to sentences in the summary. The corresponding decision problem can be viewed as a generalization of dominating set problem in the case of fuzzy hypergraphs \cite{garey2002}. We may give another interpretation of topical coverage. When maximizing $C(S)$, the first term in equation \ref{eqnCoverageX} encourages to select short sentences which balances the fact that long sentences tend to have higher relevance scores. The second term of $C(S)$ can be written as
\begin{equation}
\underset{e}{\sum}\underset{\substack{j\in S\\i \notin S}}{\sum}p(j|e)p(e|i)
\end{equation}
which encourages hyperedges to have a balanced number of incident vertices respectively in $S$ and in $V\setminus S$. This implies that each topic is indeed covered by sentences in $S$ while reducing the risk of including semantically redundant sentences covering the exact same topics. For this reason, we refer to $C(S)$ as the \textit{Topical Coverage} of $S$.

Combining both criteria of \textit{Relevance} and \textit{Topical Coverage}, our proposed method seeks sentences that are individually relevant and that jointly cover the semantic content of the corpus. This translates into a multi-objective discrete optimization problem.

\begin{definition}[Maximum Relevance and Coverage Problem (MRC)]\label{defMRC}
Given a set $V$ of sentences extracted from a corpus, a summary capacity $L$ and a set of relevance scores $\{p(s):s\in V\}$, the Maximum Relevance and Coverage Problem is
\begin{equation}\label{defObjFct}
\underset{S\subseteq V}{\max}\text{ }(1-\nu)\underset{s\in S}{\sum}p(s)+\frac{\nu}{N_s}C(S)\text{\normalfont, subject to }\underset{s\in S}{\sum}l(s)\leq L
\end{equation}
where $\{l(s):s\in V\}$ are the sentence lengths, $\nu\in [0,1]$ and $N_s=|V|$.
\end{definition}

The following theorem shows that MRC problem is NP-hard.

\begin{theorem}
For a set $V$ of sentences, a capacity $L$ and relevance scores $\{p(s):s\in V\}$, the decision problem associated to MRC is NP-hard.
\end{theorem}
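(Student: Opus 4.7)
My plan is to exhibit a polynomial-time Karp reduction from the classical $0/1$ Knapsack decision problem, which is well-known to be NP-complete. An instance of Knapsack specifies items $1,\dots,n$ with positive weights $w_1,\dots,w_n$ and values $v_1,\dots,v_n$, a weight budget $W$, and a threshold $K$; the question is whether some $I\subseteq\{1,\dots,n\}$ satisfies $\sum_{i\in I}w_i\leq W$ and $\sum_{i\in I}v_i\geq K$.

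First I would encode such an instance as an MRC instance in $O(n)$ time: take $V=\{s_1,\dots,s_n\}$, set the sentence lengths $l(s_i)=w_i$, the relevance scores $p(s_i)=v_i$, the capacity $L=W$ and the trade-off $\nu=0$, which is permitted since $\nu\in[0,1]$. A fuzzy hypergraph is still needed to make the input well-formed, but only a trivial one is required; I would attach a single hyperedge $e$ with $\psi_{ei}=1/n$ for every $i\in V$ and weight $w(e)=1$, which obviously satisfies the fuzzy hypergraph conditions. Choosing $\nu=0$ annihilates the coverage term in (\ref{defObjFct}), so the MRC objective collapses to $\sum_{s\in S}p(s)$ subject to $\sum_{s\in S}l(s)\leq L$, which is literally the Knapsack objective under the Knapsack constraint.

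Closing the reduction is then a one-line verification: a feasible subset $S\subseteq V$ achieves MRC objective value at least $K$ if and only if the corresponding index set $I\subseteq\{1,\dots,n\}$ achieves Knapsack value at least $K$ within the weight budget, so any polynomial-time decider for MRC would decide Knapsack. Since the construction is clearly polynomial in $n$, this establishes NP-hardness of the decision version of MRC.

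I do not foresee a real technical obstacle, as the combinatorial difficulty is inherited wholesale from Knapsack. The only delicate point is the appeal to the boundary value $\nu=0$; if a stricter reviewer insisted on $\nu>0$, a standard rescaling argument would save the reduction, using the bound $C(S)\leq N_s$ (so that $(\nu/N_s)C(S)\leq\nu$): multiplying all weights, values and the capacity by a sufficiently large integer $M$ makes the relevance term dominate the coverage term by more than $\nu$, after which the threshold test transfers unchanged. Either version keeps the reduction polynomial, so the claim goes through.
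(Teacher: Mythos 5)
Your reduction is correct and is essentially the paper's own proof: the authors likewise observe that setting $\nu=0$ collapses MRC to the $0$-$1$ Knapsack problem with sentences as items, relevance scores as values, and lengths as weights. Your additional remarks (the explicit trivial hypergraph and the rescaling aside for $\nu>0$) go beyond what the paper records but do not change the argument.
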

\begin{proof}
In the particular case of $\nu=0$, MRC is equivalent to $0-1$ Knapsack problem in which $V$ is the set of items, relevance scores $\{p(s):s\in S\}$ are the item values and sentence lengths $\{l(s):s\in V\}$ are the item weights. \end{proof}

As MRC problem is NP-hard, we provide a polynomial time algorithm providing an approximate solution to it with a constant approximation factor. Various scalable algorithms for finding near optimal solutions to NP-hard problems build on the submodularity and non-decreasing property of the associated objective function. These properties are defined below (definition \ref{defSubmNonDecr}).
\begin{definition}\label{defSubmNonDecr}
Given a finite set $V$, a function $F:P(V)\rightarrow\mathbb{R}$ (where $P(V)$ denotes the power set of $V$) is submodular if $\forall S\subseteq T\subset V$ and $r\in V\setminus T$
\begin{equation}
F(S\cup\{r\})-F(S)\geq F(T\cup\{r\})-F(T)
\end{equation}
and it is monotonically non-decreasing if $\forall S\subset V$ and $r\in V\setminus S$
\begin{equation}
F(S\cup\{r\})\geq F(S).
\end{equation}
\end{definition}
Our approximation algorithm builds on the property that the objective function of MRC problem is submodular and monotonically non-decreasing, which is proved in theorem \ref{theoremSubm}.
\begin{theorem}\label{theoremSubm}
The objective function $F:P(V)\rightarrow [0,1]$ of Maximum Relevance and Coverage problem (equation \ref{defObjFct}) is submodular and monotonically non-decreasing.
\end{theorem}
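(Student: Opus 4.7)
The plan is to decompose $F$ as a nonnegative combination of simpler pieces and prove the two properties separately for each piece. Since $\nu\in[0,1]$, I have $F(S)=(1-\nu)L(S)+\tfrac{\nu}{N_s}C(S)$ with nonnegative coefficients, and $L(S):=\sum_{s\in S}p(s)$ is modular (hence submodular) and nondecreasing because each PageRank score $p(s)$ is positive (ergodicity guarantees $\mu>0$ implies $p(s)>0$). Since monotone submodularity is preserved under nonnegative linear combinations, everything reduces to proving monotonicity and submodularity of the coverage term $C(S)$.

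To analyze $C(S)$, I would first introduce the shorthand $p(j|i)=\sum_e \psi_{je}\frac{\psi_{ie}w(e)}{\sum_f \psi_{if}w(f)}$ as in equation (6), and verify the elementary identity $\sum_{j\in V}p(j|i)=1$, which follows because $\sum_j\psi_{je}=1$ for each hyperedge by the fuzzy hypergraph axioms. This lets me rewrite
\begin{equation}
C(S)=|S|+\sum_{i\notin S}p(S|i),\qquad p(S|i):=\sum_{j\in S}p(j|i),
\end{equation}
and note $p(S|i)\leq 1$ for every $S$ and $i$.

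The key calculation is the marginal gain when adding $r\notin S$. Splitting the double sum in the definition of $C$ by whether $j=r$ and whether $i=r$, the bookkeeping yields
\begin{equation}
C(S\cup\{r\})-C(S)=1-p(S|r)+\sum_{i\notin S\cup\{r\}}p(r|i).
\end{equation}
Monotonicity is then immediate: $1-p(S|r)\geq 0$ because $p(S|r)\leq 1$, and each $p(r|i)\geq 0$.

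For submodularity, take $S\subseteq T$ with $r\notin T$ and subtract the analogous expression for $T$. The two ``$1$''s cancel; the probability terms give $p(T|r)-p(S|r)=\sum_{j\in T\setminus S}p(j|r)$; and the reverse transition sums give $\sum_{i\in T\setminus S}p(r|i)$ (since $V\setminus S\supseteq V\setminus T$ and the difference set is exactly $T\setminus S$, using $r\notin T$). Hence
\begin{equation}
\bigl[C(S\cup\{r\})-C(S)\bigr]-\bigl[C(T\cup\{r\})-C(T)\bigr]=\sum_{j\in T\setminus S}p(j|r)+\sum_{i\in T\setminus S}p(r|i)\geq 0,
\end{equation}
establishing the diminishing returns property. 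Combining with the linear part and the nonnegative weights $(1-\nu)$ and $\nu/N_s$ finishes the proof. The only real obstacle is the careful bookkeeping of the double sum indexed by $(i,j)$ when $r$ is transferred from the ``outside'' set to the ``inside'' set; once the rewriting $C(S)=|S|+\sum_{i\notin S}p(S|i)$ is in hand and the normalization $\sum_j p(j|i)=1$ is noted, both properties fall out of a single elementary calculation.
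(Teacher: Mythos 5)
Your proof is correct and follows essentially the same route as the paper's: both decompose $F$ into the modular relevance term plus the coverage term, compute the marginal gain $C(S\cup\{r\})-C(S)=1-p(S|r)+\sum_{i\notin S\cup\{r\}}p(r|i)$, and reduce the submodularity check to the nonnegativity of $\sum_{j\in T\setminus S}p(j|r)+\sum_{i\in T\setminus S}p(r|i)$. The only difference is presentational: you state the normalization $\sum_{j\in V}p(j|i)=1$ (hence $p(S|r)\leq 1$) explicitly, which the paper uses only implicitly when bounding $|S|+1$ from below by $|S|+\sum_{j\in S}p(j|r)$.
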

\begin{proof}
Let $V$ be the set of sentences in the corpus, $S\subseteq V$ be the selected sentences for the summary and
\begin{equation}
R(S)=\underset{s\in S}{\sum}p(s).
\end{equation}
Then $F$ becomes
\begin{equation}
F(S)=(1-\nu)R(S)+\frac{\nu}{N_s} C(S).
\end{equation}
Also let
\begin{equation}
p(j|i)=\underset{e}{\sum}\psi_{je}\frac{\psi_{ie}w(e)}{\underset{f}{ \sum}\psi_{if}w(f)}.
\end{equation}

Defining $F(\emptyset)=0$, we have $\forall S\subset V$ and $\forall r\in V\setminus S$ 

\begin{equation}
\begin{array}{rcl}
N_sF(S\cup \{r\})&=&(1-\nu)N_sR(S\cup \{r\})+\nu C(S\cup \{r\})\\
&\geq& (1-\nu)N_sR(S) + \nu(|S| + \underset{j\in S}{\sum}p(j|r)\\
&&+ \underset{\substack{j\in S\\i\notin S\cup\{r\}}}{\sum}p(j|i)+ \underset{i\notin S\cup\{r\}}{\sum}p(r|i))\\
&\geq& (1-\nu)N_sR(S) + \nu(|S| + \underset{\substack{j\in S\\i\notin S}}{\sum}p(j|i))=N_sF(S)
\end{array}
\end{equation}
which proves that $F$ is monotonically non-decreasing. To prove $F$ is submodular, we observe that $\forall S\subseteq T\subset V$ and $r\in V\setminus T$
\begin{equation}\label{eqnModular}
\begin{array}{l}
N_s((F(S\cup\{r\})-F(S))-(F(T\cup\{r\})-F(T)))\\
=\nu( \underset{i\notin S\cup\{r\}}{\sum} p(r|i)-\underset{i\notin T\cup\{r\}}{\sum} p(r|i))+\nu( \underset{j\in T}{\sum}p(j|r)-\underset{j\in S}{\sum}p(j|r)).
\end{array}
\end{equation}
Considering the first term in equation \ref{eqnModular}, we have
\begin{equation}
\underset{i\notin S\cup\{r\}}{\sum} p(r|i)-\underset{i\notin T\cup\{r\}}{\sum} p(r|i)=\underset{i\in T\setminus S}{\sum}p(r|i)\geq 0
\end{equation}
and for the second term, we have
\begin{equation}
\underset{j\in T}{\sum}p(j|r)-\underset{j\in S}{\sum}p(j|r)=\underset{j\in T\setminus S}{\sum} p(j|r)\geq 0
\end{equation}
which completes the proof of submodularity.
\end{proof}

\begin{algorithm}[H]~\\
INPUT: Set $V$ of sentences, parameter $\nu$, capacity $L$, sentence lengths $\{l_s:1\leq s\leq N_s\}$, Hypergraph $H(\{1,...,N_s\},\{1,...,K\},\psi,w)$.\\
OUTPUT: Set $S$ of indices of sentences to be included in the summary.\\
\textbf{for each} $j,i\in\{1,...,N_s\}$: compute $p(j|i)$ and $p^q(j|i)$ (equations \ref{sentenceTransition}-\ref{queryBiasedTransition})\\
Compute sentence relevance scores $\{p_i:1\leq i\leq N_s\}$ (equation \ref{pagerank})\\
Let $Z\leftarrow V$, $T\leftarrow \emptyset$, $\rho\leftarrow 0$\\
\textbf{for each} $j\in\{1,...,N_s\}$: $\pi_j\leftarrow \frac{1}{l_j}(\frac{\nu}{N_s}(1+\underset{i\neq j}{\sum} p(j|i))+(1-\nu)p_j)$\\
\textbf{while} $Z\neq\emptyset$ \textbf{and} $\rho\leq L$:\\
	\Indp $s^*\leftarrow \underset{s\in Z}{\text{argmax}}\text{ }\pi_s$\\
	$Z\leftarrow Z\setminus \{s^*\}$, $T\leftarrow T\cup\{s^*\}$, $\rho\leftarrow \rho+l_{s^*}$\\
	\textbf{for each} $j\in\{1,...,N_s\}$: $\pi_j \leftarrow \pi_j-\frac{\nu}{N_sl_j}(p(s^*|j)+p(j|s^*))$.\\
\Indm Let $Q\leftarrow\{\{s\}\text{: }l(s)\leq L,s\in V\}$\\
Let $S\leftarrow\underset{S\in\{T\}\cup Q}{\text{argmax}}(1-\nu)\underset{s\in S}{\sum}p(s)+\frac{\nu}{N_s}(|S|+\underset{j\in S, i\notin S}{\sum}p(j|i))$
\caption{Maximal Relevance and Coverage (MRC) Algorithm}
\label{algorithmSUBM}
\end{algorithm}
\FloatBarrier

MRC problem consists in the maximization of a submodular and non-decrea-sing function under a \textit{capacity constraint}. We formulate polynomial time approximation algorithm \ref{algorithmSUBM} for solving MRC problem. Our method builds on an approach proposed by Lin et al. \cite{lin2010} for the maximization of monotonically non-decreasing submodular functions under a budget constraint. We prove in theorem \ref{guaranteeTheorem} that algorithm \ref{algorithmSUBM} provides a near-optimal solution to MRC problem with a relative performance guarantee. The proof relies on the submodularity and non-decreasing property proved in theorem \ref{theoremSubm}. The time complexity of algorithm \ref{algorithmSUBM} is dominated by the computation of relevance scores and the sentence selection step which have a time complexity of $O(\tau N_s^2)$ where $\tau$ is the number of iterations for the iterative computation of relevance scores. The final summary is produced by aggregating the sentences selected by algorithm \ref{algorithmSUBM}.

\begin{theorem}\label{guaranteeTheorem}
Let $F$ be the objective function of MRC problem, then algorithm \ref{algorithmSUBM} produces a summary $S$ verifying
\begin{equation}
F(S)\geq (1-e^{-\frac{1}{2}})F(S^*)
\end{equation} 
where $S^*$ is the optimal solution of MRC problem.
\end{theorem}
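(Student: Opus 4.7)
The plan is to recognize Algorithm 2 as an instance of the cost-benefit greedy scheme of Lin and Bilmes for the maximization of a monotone non-decreasing submodular function under a knapsack constraint, and to invoke the corresponding approximation analysis. Since Theorem 2 already establishes that $F$ is monotone non-decreasing and submodular, the remaining tasks are (i) to verify that Algorithm 2 faithfully implements the cost-effective greedy rule, and (ii) to run through the associated approximation argument.

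For (i), I would check that the running quantity $\pi_j$ equals the cost-normalized marginal gain $(F(T \cup \{j\}) - F(T))/l_j$ at every iteration. At initialization $T = \emptyset$ one uses $F(\emptyset) = 0$ together with $C(\{j\}) = 1 + \sum_{i \neq j} p(j|i)$ to match the formula in the algorithm. After adding $s^*$ to $T$, a direct expansion of $C(T \cup \{s^*, j\}) - C(T \cup \{s^*\})$ and $C(T \cup \{j\}) - C(T)$ followed by cancellation shows that the per-cost marginal gain of $j$ decreases by exactly $(\nu/(N_s l_j))(p(s^*|j) + p(j|s^*))$, which is precisely the update rule. Hence, at every iteration the algorithm selects $\arg\max_s (F(T \cup \{s\}) - F(T))/l(s)$ among feasible candidates in $Z$.

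For (ii), let $T = \{s_1, \dots, s_k\}$ be the greedy sequence, let $G_i = \{s_1, \dots, s_i\}$, and let $s_{k+1}$ denote the first element the rule would select next but that violates the budget. Combining the cost-effectiveness of the greedy choice with submodularity and the fact that $s_{k+1}$ would have been selected over any $s \in S^*$ at step $k+1$ yields the standard recurrence
\begin{equation*}
F(G_i) - F(G_{i-1}) \geq \frac{l(s_i)}{L}\bigl(F(S^*) - F(G_{i-1})\bigr)
\end{equation*}
for $i = 1, \ldots, k+1$. Unrolling the recurrence and using $\sum_{i=1}^{k+1} l(s_i) > L$ together with a product-versus-sum estimate gives $F(G_{k+1}) \geq (1 - e^{-1}) F(S^*)$. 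Submodularity together with $F(\emptyset) = 0$ then yields the sub-additivity bound $F(G_{k+1}) \leq F(G_k) + F(\{s_{k+1}\})$.

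The main obstacle is to tighten the resulting naive $\tfrac{1}{2}(1 - e^{-1})$ bound to the $1 - e^{-1/2}$ factor stated in the theorem. This is achieved by the refined analysis of Lin and Bilmes, which performs a case distinction on the ratio $F(G_k)/F(S^*)$: if this ratio already exceeds $1 - e^{-1/2}$, the greedy output alone meets the target; otherwise, one reapplies the cost-effective inequality to $s_{k+1}$ with a sharper estimate on $\prod_i(1 - l(s_i)/L)$ to force $F(\{s_{k+1}\}) \geq (1 - e^{-1/2}) F(S^*)$. Balancing the two cases at the threshold $\rho = 1 - e^{-1/2}$ is where the delicate bookkeeping lies. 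Since Algorithm 2 returns the argmax over $\{T\}$ and every feasible singleton $\{s\}$ with $l(s) \leq L$, the output $S$ inherits the better of the two guarantees, and $F(S) \geq (1 - e^{-1/2}) F(S^*)$ follows.
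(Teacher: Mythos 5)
Your proposal is correct and follows essentially the same route as the paper: establish that Algorithm~\ref{algorithmSUBM} realizes the cost-normalized greedy selection combined with the best feasible singleton, and then invoke Theorem~1 of Lin and Bilmes \cite{lin2010} (whose hypotheses hold by Theorem~\ref{theoremSubm}) to obtain the $(1-e^{-1/2})$ factor. Your explicit check that the incremental update $\pi_j \leftarrow \pi_j-\frac{\nu}{N_s l_j}(p(s^*|j)+p(j|s^*))$ reproduces the per-cost marginal gain $(F(T\cup\{j\})-F(T))/l_j$ at every iteration is a correct and worthwhile verification that the paper's proof only asserts.
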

\begin{proof}
The objective function $F$ of MRC problem (definition \ref{defMRC}) is submodular and monotonically non-decreasing from theorem \ref{theoremSubm}. Hence,
\begin{equation}
\underset{S\subseteq V}{\max}\text{ }F(S)\text{, subject to }\underset{s\in S}{\sum}l(s)\leq L
\end{equation}
corresponds to the maximization of a submodular and monotonically non-decreasing function under a budget constraint \cite{lin2010}. Let $T$ be the set of sentences obtained by iteratively appending each sentence $r$ of the corpus to $T$ maximizing
\begin{equation}
\frac{F(T\cup \{r\})-F(T)}{l(r)}
\end{equation}
provided that $l(r)+\underset{s\in T}{\sum}l(s)\leq L$. Also let $Q$ be the set of sentences that are individually satisfying the capacity constraint, namely $Q=\{\{s\}\text{: }l(s)\leq L,s\in V\}$. Let the final summary consist of the set $S^F$ of sentences satisfying
\begin{equation}
S^F=\underset{S\in\{T\}\cup Q}{\text{argmax}}F(S).
\end{equation}
Then, from theorem 1 in \cite{lin2010}, the summary $S^F$ is a near optimal solution to MRC problem satisfying
\begin{equation}
F(S^F)\geq (1-e^{-\frac{1}{2}})F(S^*)
\end{equation} 
where $S^*$ is the optimal solution of MRC problem. Moreover, the set $S^F$ of sentences corresponds to the summary produced by algorithm \ref{algorithmSUBM}. 
\end{proof}

\section{Experiments and evaluation}\label{experimentSection}
We present experimental results obtained by testing our summarization framework on real-world datasets. We conduct four sets of experiments: a qualitative analysis of a summary produced by our MRC algorithm, a parameter tuning, an assessment of the relevance of each step of our method and a comparison with state-of-the-art summarizers.

For the first experiment, we gathered a new dataset of recent newspaper articles. For the other experiments, we make use of the benchmark datasets of \textit{Document Understanding Conferences} DUC05, DUC06 and DUC07 for query-oriented text summarization \cite{duc05,duc06,Dang2007}. Each data sample consists of a corpus of news articles related to a specific topic, a query and a set of query-oriented \textit{reference summaries} written by humans. The datasets contain 50, 50 and 45 different corpora. Each corpus consists of about 30 news articles of 1000 words on average. The length of the reference summaries is restricted to 250 words, so we set the summary capacity parameter $L$ to $250$.

\subsection{Example of summary}
As a preliminary experiment, we show an example of summary produced by our system. Benchmark datasets for summarization usually consist of corpora of about twenty to fifty papers of about a thousand words each. Hence, we gathered a corpus of $20$ newspaper articles of $715$ words on average (for a total of $15015$ words) related to the migration crisis faced by Europe in recent years\footnote{References to all articles of the corpus are provided in the supplemental materials.}.

A summary is generated for the following query: "\textit{Describe the challenges faced by the European Union related to migration from Subsaharian Africa and the Middle East. What policies are implemented by the members of the European Union to address these challenges?}". Table \ref{tabNewSummary} displays the top eight sentences returned by our algorithm, along with some of the corresponding topics. These topics correspond to topics inferred by our topic model that we labelled with explicit names such as "migration", "EU" or "challenges". We make the following observations regarding the summary.

\begin{table}[h]
\begin{center}
\resizebox{\textwidth}{!}{\fontsize{5}{4}\selectfont
\begin{tabular}{|L{0.2cm}|L{7cm}|L{1.5cm}|}
\hline
\rule{0pt}{3ex} & \textbf{Sentences selected for our summary} & \textbf{Topics covered by each sentence}\\
\hline
\rule{0pt}{3ex} \cellcolor{gray!25} 1. & "A record 1.3 million migrants applied for asylum in the 28 member states of the European Union, Norway and Switzerland in 2015 - nearly double the previous high water mark of roughly 700,000 that was set in 1992 after the fall of the Iron Curtain and the collapse of the Soviet Union, according to a Pew Research Center analysis of data from Eurostat, the European Union's statistical agency." \cite{summary1} & Countries of destination, EU, Asylum, Migration\\
\hline
\rule{0pt}{3ex} \cellcolor{gray!25} 2. & "Some view this as a humanitarian crisis and others see it as a challenge and a threat." \cite{summary2}& Challenges, Humanitarian\\
\hline
\rule{0pt}{3ex} \cellcolor{gray!25} 3. & "Security, political, and social concerns compound these challenges." \cite{summary3} & Challenges, Political\\
\hline
\rule{0pt}{3ex} \cellcolor{gray!25} 4. & "The study commissioned by UNHCR found that the profiles and nationalities of people arriving in Libya have been evolving over the past few years, with a marked decrease in those originating in East Africa and an increase in those from West Africa, who now represent well over half of all arrivals to Europe through the Central Mediterranean route from Libya to Italy (over 100,000 arrivals in 2016)." \cite{summary4} & Countries of origin, Countries of destination, Migration\\
\hline
\rule{0pt}{3ex} \cellcolor{gray!25} 5. & "The dislocation of large parts of the population in Syria and other conflict zones is, first and foremost, a humanitarian catastrophe with important ramifications across many countries in the Middle East, Europe, and beyond." \cite{summary3} & Conflict, Humanitarian, Countries of origin\\
\hline
\rule{0pt}{3ex} \cellcolor{gray!25} 6. & "Border restrictions in the Western Balkans and a deal with Turkey led to a significant decline in arrivals by sea to Greece of asylum seekers and other migrants, while boat migration from North Africa to Italy remains steady." \cite{summary5} & Policy, Countries of origin, Countries of destination, Migration, EU\\
\hline
\rule{0pt}{3ex} \cellcolor{gray!25} 7. & "Furthermore, authors warn that tensions between immigrants and native workers, fueled by an unsubstantiated but widespread belief that immigrants "undercut" natives in the labor market, may lead to immigrant-backlash and hinder the social and economic integration of immigrants, especially in countries where immigration-related conflicts are already evident." \cite{summary6} & Challenges, Social, Labor, Conflicts\\
\hline
\rule{0pt}{3ex} \cellcolor{gray!25} 8. & "In particular, Europe faces a major demographic challenge: our population is aging, and, in many countries, shrinking." \cite{summary2} & Challenges, Demography, EU\\
\hline
\end{tabular}}
\end{center}
\caption{Example of summary from a corpus of $20$ articles related to migration crisis in Europe.}
\label{tabNewSummary}
\end{table}
\FloatBarrier

First, several different topic labels are assigned to each sentence, which captures the multiplicity of topics covered by sentences. In contrast, previous hyper-graph-based summarization algorithms were based on the classification of each sentence in a single cluster \cite{wanng2013,xiong2016}. 

Second, we observe that selected sentences exhibit a certain level of lexical redundancy since various words appear several times in the sentences (e.g. the word "Europe"). This is due to the fact that our Relevance and Topical Coverage criterion ensures that the resulting summary presents a good coverage of our fuzzy hypergraph without further restriction on the level of lexical redundancy.

Third, we observe that selected sentences do not necessarily have terms in common with the query (such as sentences 2, 3 and 7 which share only one non-trivial term with the query). This highlights the ability of our method to measure a query similarity based on common topics rather than common terms as done in the majority of existing query-focused summarization methods. 

Finally, we observe that our summary covers the main themes present in the corpus and already described above:

\begin{itemize}
\itemsep0em
\item sentence 1: primary European countries that were impacted by the crisis,
\item sentences 2 and 3: challenges faced by European countries,
\item sentence 4: nationalities of migrants, primary European countries that were impacted by the crisis,
\item sentence 5: causes of the migration outbreak,
\item sentence 6: policies implemented by European countries,
\item sentence 7: (socioeconomic) challenges faced by European countries,
\item sentence 8: (demographic) challenges faced by European countries.
\end{itemize}

\subsection{Metrics for summary evaluation}\label{metricSection}
Two aspects of our automatically generated summaries are evaluated, namely their \textit{content} and \textit{diversity}. These aspects are evaluated based on a comparison with reference summaries written by humans. The content evaluation verifies whether the information coverage of a summary matches that of the reference summaries. The diversity test checks whether the candidate summary presents sufficient diversity in its content and little redundancy. For content evaluation, we make use of ROUGE toolkit \cite{lin2004} which includes several popular recall-based metrics for summary evaluation. Each metric measures the overlap in different types of word sequences between reference summaries and a candidate summary. We make use of ROUGE-N which measures the number of N-grams that are found in both the set of reference summaries and the candidate summary divided by the total number of N-grams in the reference summaries. In particular, as suggested in \cite{wanng2013}, we use ROUGE-2 metric to evaluate the content of our candidate summaries. We also use ROUGE-SU4 metric which counts both the number of common unigrams (terms) and 4-skip-bigrams, namely pairs of words that are separated by at most four words in a summary. ROUGE-SU4 allows for more flexibility in word ordering than ROUGE-N. Hence, we use ROUGE-SU4 as a reference metric and we report ROUGE-2 for the sake of completeness. The parameter setting of ROUGE metrics is done according to DUC evaluations: jackknife resampling is performed, words in summaries are stemmed but stop-words are not removed. More information can be found in the description of ROUGE toolkit \cite{lin2004} and in the description of DUC evaluations \cite{duc05,duc06,Dang2007}.

Finally, to evaluate the diversity of a summary, we measure the \textit{Normalized Entropy} of its term distribution $[p_1,...,p_{N_t}]$, namely
\begin{equation}
H(p)=-\frac{1}{\log N_t}\sum_{i=1}^{N_t}p_i\log p_i.
\end{equation}

The normalized entropy is $0$ for a sentence containing a single term and it is $1$ for a uniform distribution over terms. Hence, it can be interpreted as a measure of the \textit{Lexical Diversity} of a summary. It gives an indication of the non-redundancy of the information present in it.

\subsection{Parameter tuning}\label{sectionParamTuning}

For our HDP-based model, the implementation of \cite{wang2012} is used which is based on Gibbs sampling and can be adapted for multiple-level HDP. The values of parameters $\lambda$, $\mu$ and $\nu$ are set to values of $0.9$, $0.99$ and $0.2$ respectively and the values of the four concentration parameters are tuned. A validation set consisting of $90\%$ of corpora of DUC07 dataset is randomly selected and, for each corpus and for different values of the concentration parameters, the model is evaluated via a leave-one-out cross-validation due to the limited size of the corpora. We use a method similar to that of \cite{xiong2016} for parameter tuning, with values of $\gamma$ in the range $1,...,10$, of $\beta$ from $0.5$ to $5$ and of $\alpha$ from $0.25$ to $2.5$. Highest ROUGE-SU4 scores are achieved for values of $7.0$ for $\gamma$, $1.5$ for $\beta$ and $0.75$ for $\alpha$. We choose smaller values for $\alpha$ than for $\beta$ since we expect the level of variability of topics within sentences to be smaller than that observed at a document-level. The same observation is valid when comparing $\beta$ (documents) to $\gamma$ (corpus). Finally we choose the value of concentration parameter $\zeta$ of the symmetric Dirichlet prior to be $0.5$ in accordance with what was suggested in the original version of HDP \cite{hdp}. 

We now conduct an experiment to find suitable values of the main parameters of our method, namely the query balance $\lambda$, the damping factor $\mu$ and the coverage balance $\nu$. We apply an alternating maximization strategy in which two parameters are set to a value in $[0,1]$ and we seek the value of the third parameter that maximizes ROUGE-SU4. The optimal values we obtain for the three parameters using cross-validation are approximately $\lambda=0.75$, $\mu=0.99$ and $\nu=0.35$. A value of $\lambda=0.75$ gives more weight to the score propagation term than to the query relevance, $\mu=0.99$ is a standard value for the damping factor of a PageRank-like algorithm \cite{R17} and $\nu=0.35$ gives more weight to the Relevance criterion than the Topical Coverage criterion. Next we show the variation of both ROUGE-SU4 and Lexical Diversity with the value of each parameter. In each case we set two parameters to the values above and we let the third parameter vary between $0$ and $1$. We computed the average ROUGE-SU4 and Lexical Diversity scores achieved by each candidate summary produced for each corpus of DUC07 dataset.

We first set the values of $\mu$ and $\nu$ respectively to $0.99$ and $0.35$ and we let $\lambda$ vary between $0$ and $1$. Figure \ref{lambda_graphs} displays the evolution of ROUGE-SU4 and Lexical Diversity as a function of $\lambda$. We observe that ROUGE-SU4 reaches a peak close to $\lambda=0.75$. We recall that parameter $\lambda$ commands the extent to which scores are learnt from the query relevance or from the propagation of scores across the fuzzy hypergraph. $\lambda=0$ gives credit to the query relevance only while $\lambda=1$ focuses on propagation. Our experiment shows that the propagation accross the fuzzy hypergraph improves the quality of the output over that obtained with query relevance only, with a sharp initial increase in quality. A maximum ROUGE-SU4 score of $0.1792$ is achieved for $\lambda=0.75$. However, the score varies smoothly above $0.17$ when $\lambda$ lies in the interval $[0.2,0.8]$. This shows that our method is not highly sensitive to the value of $\lambda$. In figure \ref{lambdaDiversity}, we display the evolution of the Lexical Diversity with $\lambda$. We observe that the lexical diversity does not vary significantly for $\lambda\in [0,0.8]$ and it subsequently increases with $\lambda$ as low values of $\lambda$ emphasize on the query relevance while high values of $\lambda$ give more weight to the score propagation term which results in lexically diverse summaries.

Next, we set the values of $\lambda$ to $0.75$ and $\nu$ to $0.35$ and we let $\mu$ vary between $0$ and $0.99$. The damping factor is a parameter that ensures the convergence of our PageRank-like algorithm by letting the random walker jump to any node of the hypergraph with a small probability $(1-\mu)$ at each step. Figure \ref{muRougeSU} shows that ROUGE-SU4 reaches a peak for a value close to $0.99$. The Lexical Diversity of the summary displayed in graph \ref{muDiversity} obviously rises when $\mu$ decreases but this is due to the fact that a lower value of $\mu$ results in similar scores for all sentences.

Finally, we set the values of $\lambda$ and $\mu$ respectively to $0.75$ and $0.99$ and we let coverage balance parameter $\nu$ vary between $0$ and $1$ (figure \ref{nuRougeSU}). We recall that parameter $\nu$ determines the balance between Relevance and Topical Coverage criteria in the sentence selection process. $\nu=0$ focuses on the Relevance criterion while $\nu=1$ focuses on the Topical Coverage criterion. We observe that ROUGE-SU4 reaches a peak around $\nu=0.35$. The impact of the Topical Coverage criterion is significant since $\nu=0.35$ greatly increases ROUGE-SU4 score over $\nu=0$. Moreover, any value of $\nu$ in the interval $[0.1,0.7]$ results in a score above $0.17$ which confirms the low sensitivity of our method to the value of parameter $\nu$. On the other hand we observe in figure \ref{nuDiversity} that the Lexical Diversity of the summary grows with $\nu$ which shows that, while our Topical Coverage criterion is meant to increase the topical diversity of the summary, it also reduces the lexical redundancies compared to a selection based on relevance only.

\begin{figure}[!h]
\centering
\subfigure{
    \includegraphics[width=.45\textwidth]{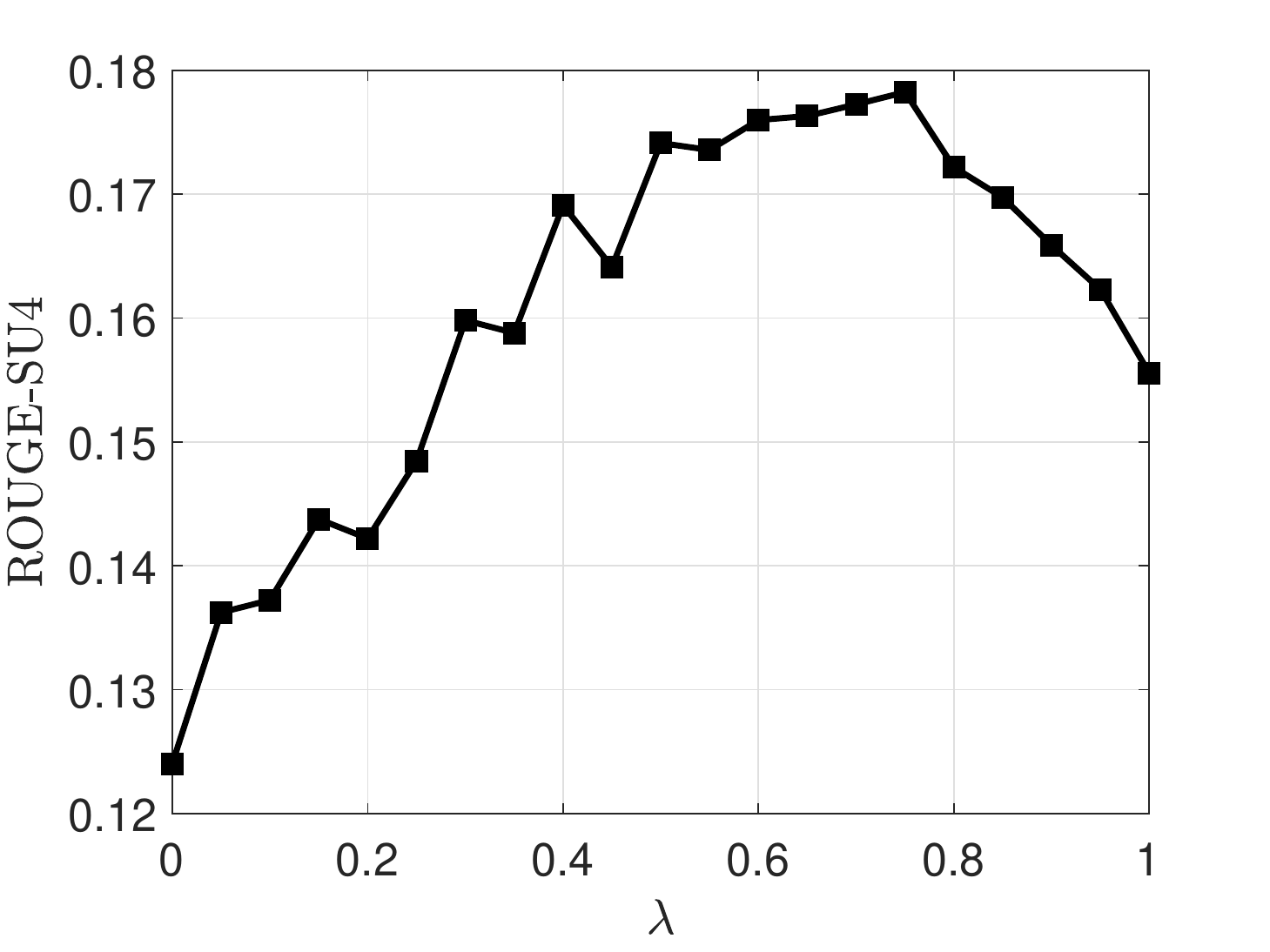}
    \label{lambdaRougeSU}
}
\subfigure{
	\includegraphics[width=.45\textwidth]{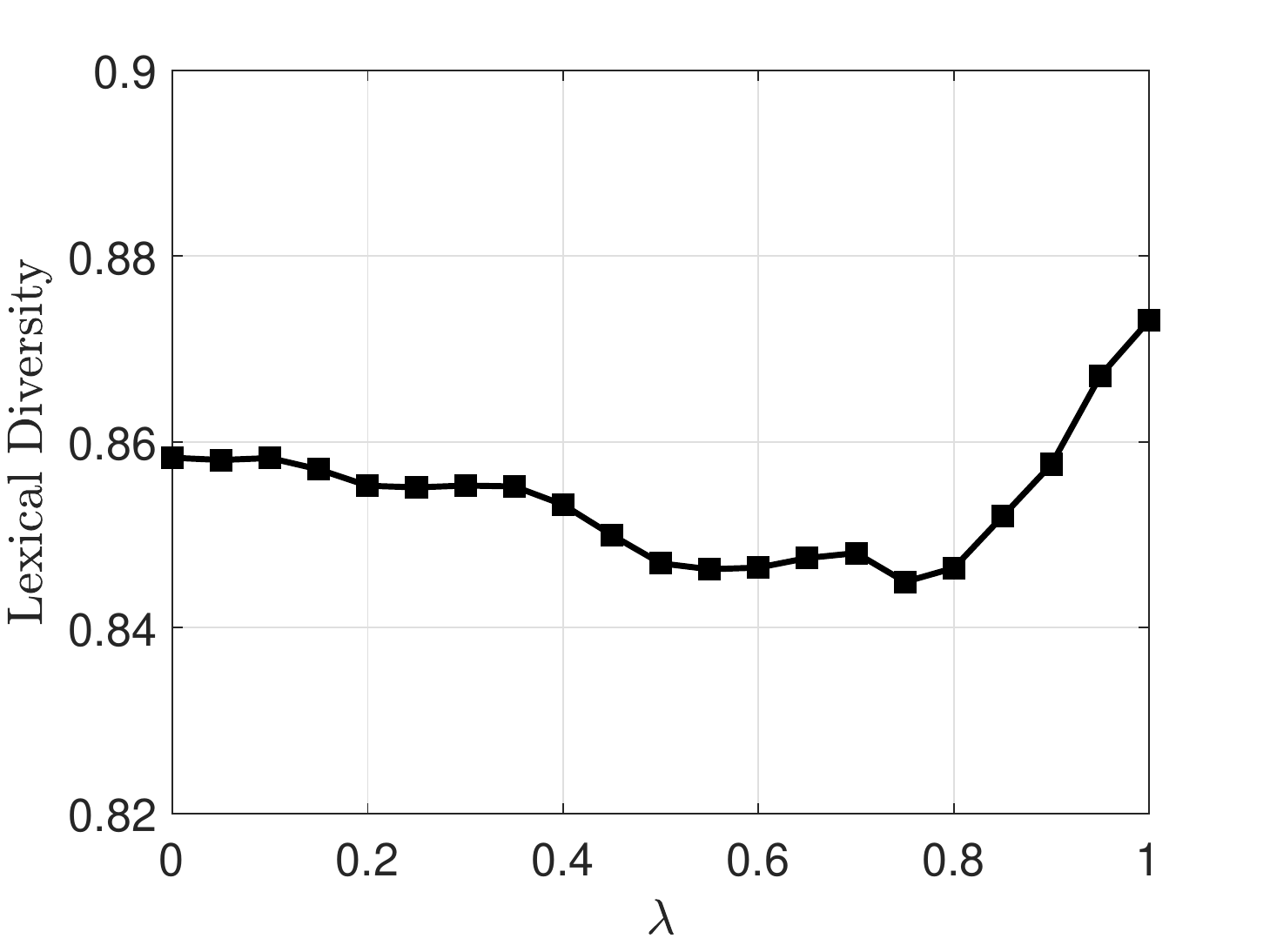}
	\label{lambdaDiversity}
}
\caption{ROUGE-SU4 and Lexical Diversity as a function of $\lambda$ for $\mu=0.99$ and $\nu=0.35$.}
\label{lambda_graphs}
\end{figure}

\begin{figure}[!h]
\centering
\subfigure{
    \includegraphics[width=.45\textwidth]{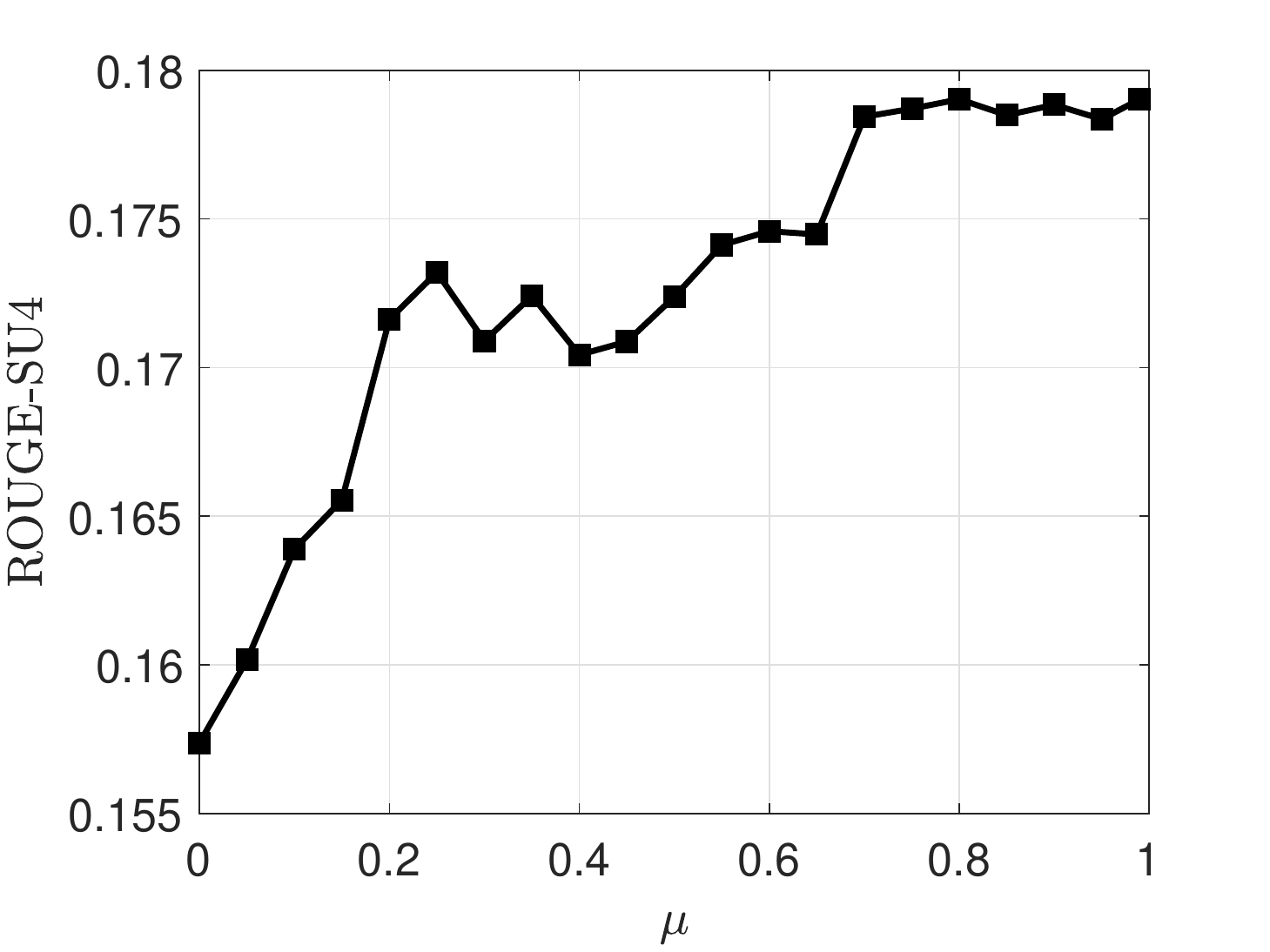}
    \label{muRougeSU}
}
\subfigure{
	\includegraphics[width=.45\textwidth]{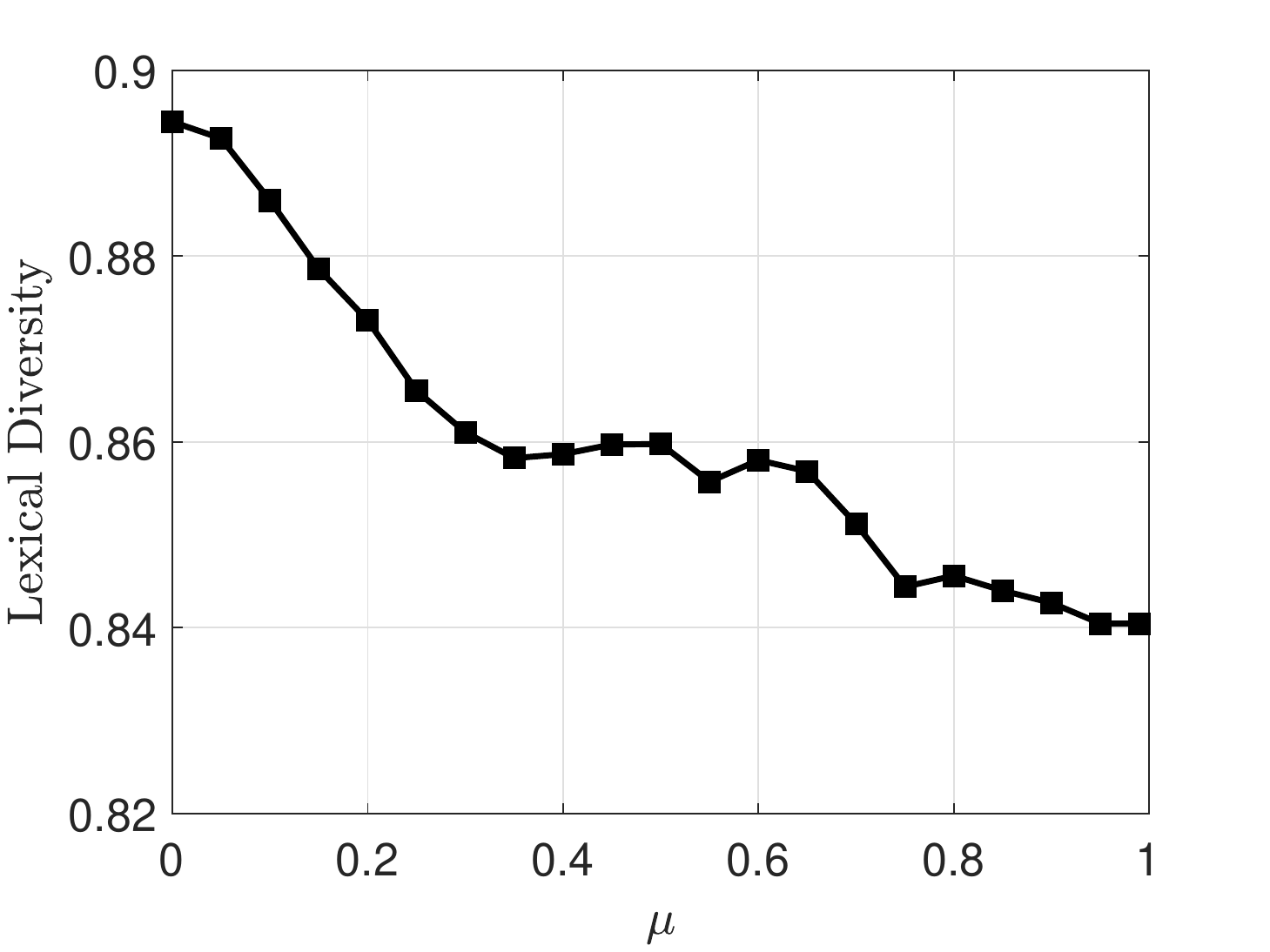}
	\label{muDiversity}
}
\caption{ROUGE-SU4 and Lexical Diversity as a function of $\mu$ for $\lambda=0.75$ and $\nu=0.35$.}
\label{mu_graphs}
\end{figure}

\begin{figure}[!h]
\centering
\subfigure{
    \includegraphics[width=.45\textwidth]{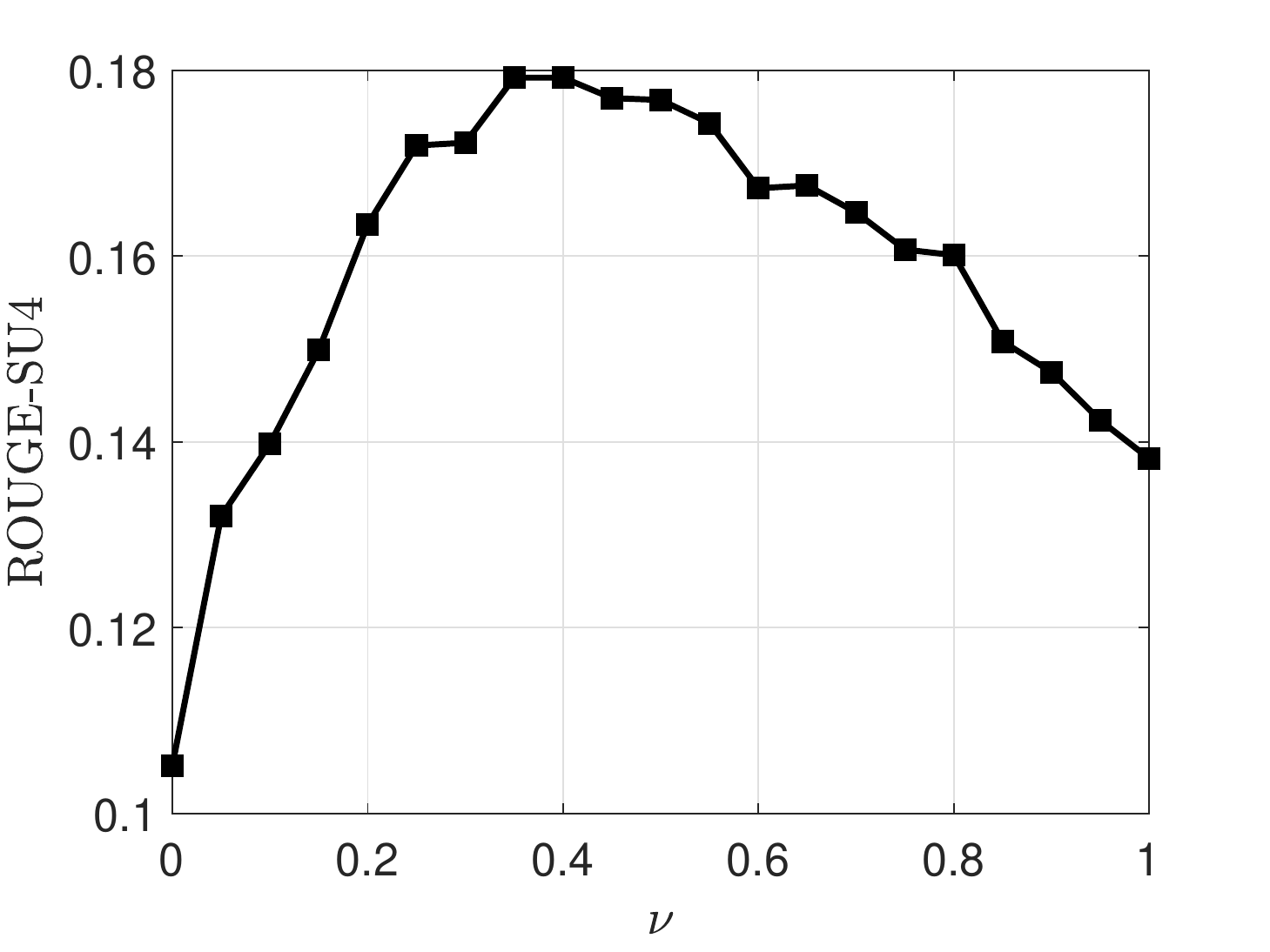}
    \label{nuRougeSU}
}
\subfigure{
	\includegraphics[width=.45\textwidth]{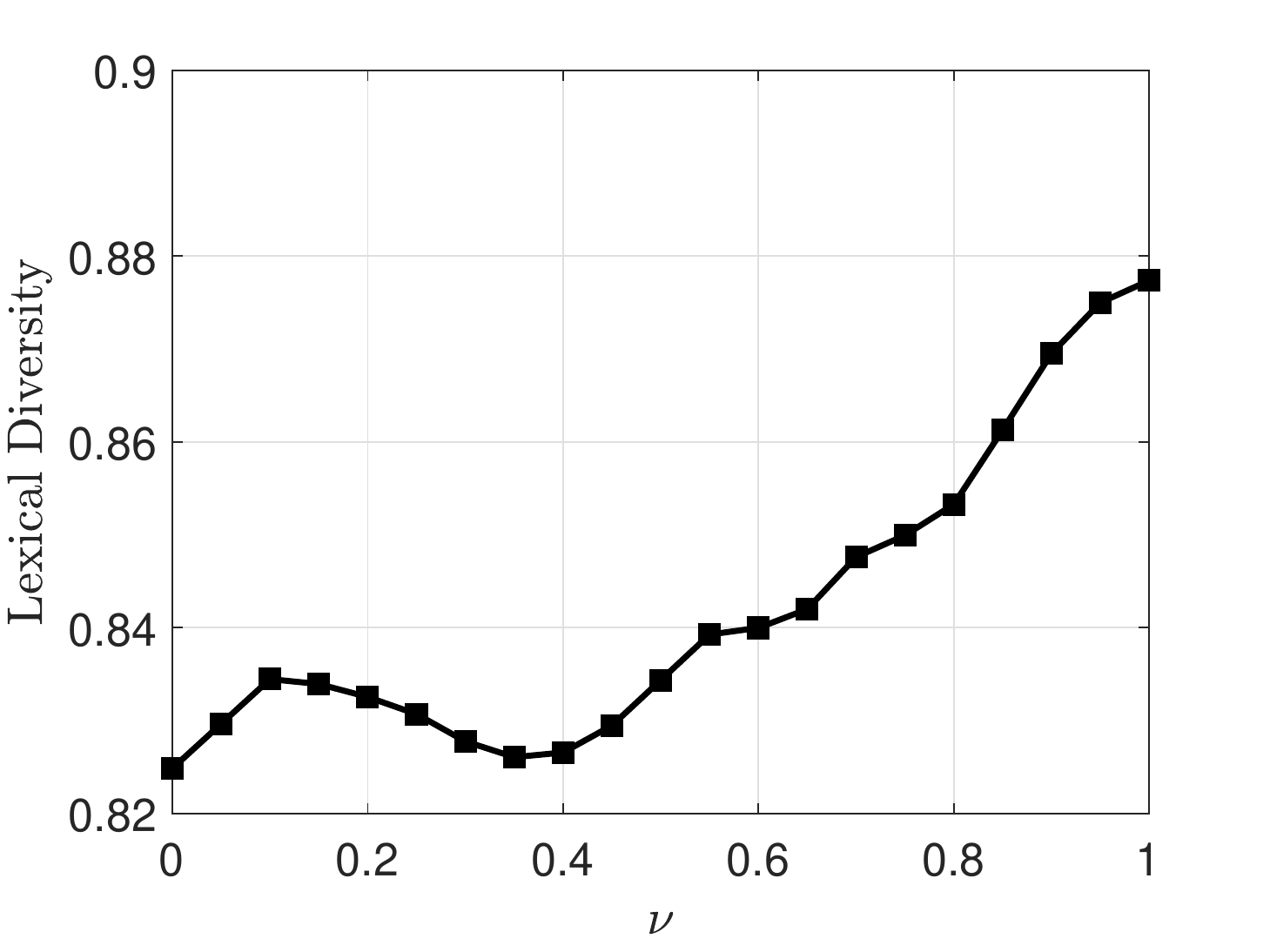}
	\label{nuDiversity}
}
\caption{ROUGE-SU4 and Lexical Diversity as a function of $\nu$ for $\lambda=0.75$ and $\mu=0.99$.}
\label{nu_graphs}
\end{figure}

\subsection{Testing the hypergraph construction}
This experiment shows the relevance of our hypergraph construction method. Since other methods were already proposed to incorporate topical or cluster relationships in graph-based summarization frameworks \cite{wanng2013,xiong2016}, we test other models for the hyperedges of our fuzzy hypergraph.

We present five other popular ways to infer relationships between sentences. The first method called \textit{Latent Dirichlet Allocation} (LDA) \cite{blei2003} is a probabilistic topic model which associates a single distribution over a predefined number of topics to each document and represents each topic as a distribution over terms. The main differences between LDA and HDP are first that LDA takes the number of topics as a parameter whose value must be determined by cross validation \cite{hdp}. Second, LDA does not provide a flexible hierarchical framework as HDP does. Hence, sentence topic tags are extracted from document topic tags using a heuristic described in \cite{arora2008}. The second hyperedge model builds on \textit{Terms} instead of higher-level topical relationships. Each term defines a hyperedge connecting the sentences in which the term is present. The term frequency within each sentence defines the hyperedge distribution over sentences. The weight of each hyperedge $t$ is the product of the term frequency $\text{tfc}(t)$ and the isf weight $\text{isf}(t)$ (equation \ref{isfDef}).

The remaining hyperedge models are based on the detection of  clusters of lexically similar sentences. Clusters are obtained by applying clustering algorithms to tfisf representations of sentences \cite{blake2006}. Each sentence cluster represents a hyperedge over sentences and the hyperedge weights are defined as the cosine similarity between the tfisf representation of the corresponding sentence cluster and the tfisf representation of the whole corpus as suggested in \cite{wanng2013,xiong2016}. Three clustering algorithms are tested using the cosine distance between tfisf representations as a distance metric over sentences. The first algorithm is $k$\textit{-means} and, in particular, Lloyd's algorithm \cite{Lloyd1982}. The second method is \textit{agglomerative clustering} \cite{rokach2005}, a popular hierarchical clustering method. Finally, a nonparametric version of \textit{DBSCAN} clustering algorithm \cite{wanng2013} is tested. \cite{wanng2013} showed that DBSCAN best captures groups of lexically similar sentences, due to its ability to remove outliers. As suggested in \cite{wanng2013}, additional pairwise hyperedges based on the cosine similarity between tfisf representations of sentences are also included in the hypergraph.

The values of the parameters of the algorithms are set in the same way as we did for parameters of our MRC algorithm: $k$-means is ran for a number of clusters of $10$ to $150$ with steps of $5$ and the optimal number of clusters is $70$. Similarly, for LDA, the optimal number of topics is $55$. Finally, the stopping criterion of Agglomerative Clustering requires a threshold. Its optimal value is searched in the interval $[0,1]$ and found to be $0.21$.

Table \ref{tabHyperedges} displays ROUGE-2 and ROUGE-SU4 scores and corresponding $95\%$ confidence intervals for all seven hyperedge models, including our MRC algorithm with parameter values given in section \ref{sectionParamTuning}. We do not display the Lexical Diversity measure since diversity of summaries is not enforced by our sentence ranking step. We observe that our MRC algorithm outperforms LDA-based approach by $14\%$ in terms of ROUGE-SU4 which confirms that the hierarchical structure of our topic model provides a more accurate model for the distribution of sentences over topics. Moreover, it also outperforms the term-based model by $5\%$ in terms of ROUGE-SU4 which shows that the extraction of semantically related terms in the form of topics increases the quality of the resulting summary. Finally our MRC algorithm outperforms the cluster-based approaches and, in particular, it outperforms best performing DBSCAN algorithm by $5\%$ in terms of ROUGE-SU4. This justifies our choice of a topic model tagging sentences with multiple topics instead of a cluster-based approach classifying each sentence in a single cluster. Overall our algorithm outperforms other hyperedge models by $25\%$ in terms of ROUGE-2 and by $9\%$ in terms of ROUGE-SU4, on average.

\begin{table}[!h]
\begin{center}
\fontsize{9}{7}\selectfont
\begin{tabular}{|| c | c | c | c ||}
   \hline
   \rule{0pt}{2ex} Hyperedge model & ROUGE-2 & ROUGE-SU4 \\
   \hline
	\rule{0pt}{2ex} \textbf{MRC} & $\mathbf{0.12745 (0.11791 - 0.13699)}$ & $\mathbf{0.1792 (0.17065-0.18775)}$\\
	LDA & $0.09336  (0.081 - 0.10572)$ & $0.15666  (0.15078 - 0.16254)$\\
	TERMS & $0.1131  (0.10833 - 0.11786)$ & $0.1708  (0.16616 - 0.17544)$\\
	KMEANS & $0.10574  (0.09366 - 0.11781)$ & $0.16831  (0.16095 - 0.17567)$\\
	AGGLOMERATIVE & $0.09251  (0.07899 - 0.10603)$ & $ 0.1534 (0.14236 - 0.16444)$\\
	DBSCAN & $0.10636  (0.09475 - 0.11797)$ & $0.17049  (0.16385 - 0.17713)$\\
   \hline	
\end{tabular}
\end{center}
\caption{Performance of our MRC algorithm and other hyperedge models}
\label{tabHyperedges}
\end{table}
\FloatBarrier

\subsection{Testing the Relevance and Coverage criterion}
In this experiment, we analyse the impact of our MRC-based sentence selection step on the content and the Lexical Diversity of the resulting summary. 

The first method, \textit{Greedy Redundancy Removal} (GRR) \cite{xiong2016}, iteratively selects sentences in descending order of scores, provided that the similarity of a newly selected sentence with each already selected sentence does not exceed a threshold $\chi_1\in [0,1]$. The similarity measure is the cosine similarity between tfisf representations of sentences. 

The second method, called \textit{One-Per-Hyperedge} (OPH) method, selects one sentence per topic (i.e. hyperedges) as suggested in \cite{gong2001}. Hyperedges are first ordered in decreasing order of weight. Then, for each hyperedge $e$, the sentence $i$ with maximal associated probability $\psi_{ei}$ is included in the summary.

The third method, referred to as \textit{Maximal Relevance Minimum Similarity} (MRMS) method \cite{yin2015}, seeks a summary maximizing the function
\begin{equation}
Q(S)=\chi_2\underset{i\in S}{\sum}r_i^2-\underset{i,j\in S}{\sum}r_i\text{Sim}(i,j)r_j
\end{equation}
subject to a cardinality constraint $|S|=k$ and with $\chi_2\geq 2$ and a set of relevance scores $\{r_i:1\leq i\leq N_s\}$. We define similarities based on the transition probabilities over our fuzzy hypergraph $\text{Sim}(i,j)=\frac{1}{2}(p(i|j)+p(j|i))$ (equation \ref{sentenceTransition}). The first term of $Q$ enforces the sentence relevance and the second term enforces the Lexical Diversity of the summary. As $Q$ is submodular and non-decreasing, \cite{yin2015} provides an iterative algorithm to find an approximate solution to the problem.

The fourth method, referred to as \textit{Maximum Corpus Similarity} (MCS) \cite{lin2010}, seeks a summary $S$ maximizing
\begin{equation}
R(S)=\underset{i\in V\setminus S}{\sum}\underset{j\in S}{\sum}\text{Sim}(i,j)-\chi_3\underset{i,j\in S}{\sum} \text{Sim}(i,j)
\end{equation}
subject to a capacity constraint and with $\chi_3>0$ and similarities defined in the same way as for MCS algorithm. An iterative algorithm is formulated in \cite{lin2010} to find an approximate solution to the problem.

Our approach shares some similarities with both MRMS (maximum Relevance) and MCS (maximum Coverage). Indeed, we combine both the relevance of sentences and the coverage of topics in our objective function, but we do not impose any constraint on the dissimilarity between selected sentences.

For $\chi_1\in [0,1]$, $\chi_2\in [2,10]$ and $\chi_3\in [0,10]$, the values achieving the best performance based on cross-validation are $\chi_1=0.1$, $\chi_2=3$ and $\chi_3=4.2$. Table \ref{tabCover} displays ROUGE-2, ROUGE-SU4 and Lexical Diversity scores achieved on DUC07 and the corresponding $95\%$ confidence intervals. In terms of ROUGE-SU4, our MRC algorithm outperforms other approaches by at least $7\%$. OPH ($21\%$) yields the worst performance. This confirms that a naive approach selecting one sentence only per hyperedge severely deteriorates the quality of the summary. The Lexical Diversity achieved by our MRC algorithm exceeds that of GRR and MRMS approaches by about $1\%$. The lexical diversity score is higher for MCS method than for our MRC algorithm which was expected since MCS selects lexically dissimilar sentences while our MRC algorithm focuses on Topical Coverage. Moreover, the fact that MCS algorithm achieves a lower ROUGE-SU4 score by $17\%$ compared to our MRC algorithm proves that our topical approach results in a better content coverage than methods focusing on the removal of lexical redundancies. The Lexical Diversity is also higher for OPH which selects one sentence per hyperedge regardless of its centrality in the hypergraph. Nevertheless, this approach is outperformed by $21\%$ by our MRC algorithm in terms of ROUGE-SU4 score.

\begin{table}[!h]
\begin{center}
\resizebox{\textwidth}{!}{\fontsize{5}{3}\selectfont
\begin{tabular}{|| c | c | c | c | c | c ||}
   \hline
   \rule{0pt}{2ex} Sentence Selection Method & ROUGE-2 & ROUGE-SU4 & Lexical Diversity\\
   \hline
	\rule{0pt}{2ex} \textbf{MRC} & $\mathbf{0.12745 (0.11791 - 0.13699)}$ & $\mathbf{0.1792 (0.17065-0.18775)}$ & $0.86313 (0.84105-0.88521)$\\
	GRR & $0.11858 (0.10694 - 0.13021)$ & $0.1682 (0.1603 - 0.1761)$ & $0.85114 (0.81745 - 0.88482)$\\
	OPH & $0.09346 (0.08096 - 0.10595)$ & $0.14857 (0.14135 - 0.15579)$ & $\mathbf{0.95309 (0.94411 - 0.96206)}$\\
	MRMS & $0.12621 (0.11438 - 0.13803)$ & $0.16936 (0.16147 - 0.17725)$ & $0.85403 (0.82349 - 0.88456)$\\
	MCS & $0.10608 (0.0934 - 0.11875)$ & $0.15269 (0.14337 - 0.16201)$ & $0.93929 (0.92726 - 0.95132)$\\ 
   \hline
     
\end{tabular}}
\end{center}
\caption{Performance of our MRC sentence selection compared to GRR, OPH, MRMS and MCS.}
\label{tabCover}
\end{table}
\FloatBarrier

\subsection{Comparison with other graph-based summarization algorithms}

We compare our MRC algorithm to four state-of-the-art graph or hypergraph-based summarizers. Unless stated otherwise, lexical similarity denotes the cosine similarity between tfisf representations of sentences as defined in \cite{blake2006}.

\textit{Topic-sensitive LexRank} (TS-LexRank) defines a graph in which an edge connects two sentences if they have nonzero lexical similarity \cite{R17}. Sentence scores are obtained through a query-biased PageRank algorithm: the score $r_i$ of sentence $i$ is
\begin{equation}
r_i=\omega_1\frac{\text{sim}(i,q)}{\sum_j\text{sim}(j,q)}+(1-\omega_1)\frac{\sum_{j\neq i}\text{sim}(i,j)r_j}{\sum_{l\neq i}\text{sim}(i,l)}
\end{equation}  
in which $\omega_1\in ]0,1[$ is a parameter whose value is set to $0.95$, as in \cite{R17}.

The second method \cite{wan2008}, based on \textit{Hubs and Authorities} algorithm, first discovers sentence clusters by applying agglomerative clustering to tfisf representations of sentences. A bipartite graph is then formed in which sentences and clusters represent vertices and edges have weights corresponding to their lexical similarities. HITS algorithm is then applied to rank both sentences (considered as authorities) and clusters (considered as hubs) based on the iterative formulas
\begin{equation}
r_i = \underset{l}{\sum} \text{sim}(i,l)q_l\text{, }\text{ }\text{ }\text{ }\text{ }q_l = \underset{i}{\sum} \text{sim}(i,l)r_i
\end{equation}
where $r_i$ is the score of $i$-th sentence and $q_l$ is the score of $l$-th cluster. To produce query-oriented summaries, we restrict the sentence set to the top $10\%$ of sentences relevant to the query, as suggested in \cite{wanng2013}.

\textit{HyperSum} is a hypergraph-based text summarizer \cite{wanng2013}. It first applies DBSCAN algorithm to detect clusters of lexically similar sentences. A hypergraph is built in which each cluster defines a hyperedge connecting sentences of the cluster. Sentence scores are obtained by applying a semi-supervised learning algorithm in which query relevance scores are propagated across the hypergraph.

\textit{HERF} builds on a similar principle but it includes an initial topic modelling step in which topics are extracted from sentences using a topic model \cite{xiong2016}. DBSCAN clustering algorithm is then applied to topic representations of sentences in order to extract sentence clusters. A hypergraph is built in the same way as for HyperSum. Scores are computed by applying a diversified version of PageRank algorithm called DivRank, which extracts both relevant and non-redundant sentences. The value of the DivRank's transition factor is set to $0.97$ as in \cite{xiong2016}.

Table \ref{tabGraphMethods} displays ROUGE-2 and ROUGE-SU4 scores for all five methods. We observe that our MRC algorithm outperforms TS-LexRank and Hubs and Authorities by at least $8\%$ on DUC06 and DUC07 and at least $2\%$ on DUC05 which justifies our use of a hypergraph that incorporates group relationships among sentences rather than a simple graph. HyperSum performs slightly better than MRC on DUC05 in terms of ROUGE-2. However, our method outperforms HyperSum and HERF by at least $5\%$ on DUC06 and DUC07. These two hypergraph approaches are limited to the detection of disjoint sentence clusters and do not take advantage of the fuzzy semantic relationships between sentences. They also fail to provide a proper method of sentence selection after sentence ranking, while our method involves the maximization of Relevance and Topical Coverage.

\begin{table}[!h]
\begin{center}
\resizebox{\textwidth}{!}{\fontsize{5}{5}\selectfont
\begin{tabular}{|| c | c | c | c | c | c | c ||}
   \hline
	\rule{0pt}{2ex}  & \multicolumn{2}{c|}{DUC05} & \multicolumn{2}{c|}{DUC06} & \multicolumn{2}{c||}{DUC07}\\    
   \hline
   \rule{0pt}{2ex} Algorithm & ROUGE-2 & ROUGE-SU4 & ROUGE-2 & ROUGE-SU4 & ROUGE-2 & ROUGE-SU4\\
   \hline
	\rule{0pt}{2ex} \textbf{MRC} & $\mathbf{0.07864}$ & $0.12824$&$\mathbf{0.10947}$ & $\mathbf{0.16141}$&$\mathbf{0.12745}$ & $\mathbf{0.17920}$\\
	TS-LEXRANK & $0.07231$ & $0.12554$&$0.08892$ & $0.14741$& $0.11048$ & $0.16524$\\
	HUBS \& AUTH. & $0.06902$ & $0.12217$& $0.08172$ & $0.13731$ & $0.10493$ & $0.15756$\\
	HYPERSUM & $0.07291$ & $\mathbf{0.13087}$& $0.09569$ & $0.15182$ & $0.11197$ & $0.16612$\\
	HERF & $0.06212$ & $0.12244$& $0.07226$ & $0.15346$ & $0.11234$ & $0.16330$\\
   \hline   
   
\end{tabular}}
\end{center}
\caption{Comparison of our MRC algorithm with four methods on DUC05, DUC06 and DUC07.}
\label{tabGraphMethods}
\end{table}
\FloatBarrier

\subsection{Comparison with DUC systems}
Finally, we compare the performance of our method to that of other summarizers submitted for DUC07 summarization tasks. Regarding DUC07 question answering task, table \ref{tabCompar} reports ROUGE-2 and ROUGE-SU4 for the top four systems ($S15$, $S29$, $S4$, $S24$), for the worst human summarizer ($Hum$), for the baseline chosen by NIST (leading sentences of randomly selected documents) and for the average performance of all systems. The same results are displayed for DUC06 dataset in which the best systems are $S24$, $S15$, $S12$ and $S8$, and for DUC05 in which the best systems are $S15$, $S17$, $S10$ and $S8$. Apart from DUC05, we observe that our proposed method slightly outperforms other summarizers in terms of ROUGE-2 and ROUGE-SU4 but it performs worse than the human summaries which was expected since we merely extract sentences from the original corpus, hence the resulting summary cannot match the quality of abstractive summaries produced by humans. Overall, we observe that our system achieves better performances on DUC06 and DUC07 than it does on DUC05 dataset.

\begin{table}[!h]
\begin{center}
\resizebox{\textwidth}{!}{\fontsize{5}{3}\selectfont
\begin{tabular}{|| c | c | c | c | c | c | c ||}
   \hline
	\rule{0pt}{2ex}  & \multicolumn{2}{c|}{DUC05} & \multicolumn{2}{c|}{DUC06} & \multicolumn{2}{c||}{DUC07}\\  
   \hline
   \rule{0pt}{2ex} Method & ROUGE-2& ROUGE-SU4 & ROUGE-2& ROUGE-SU4 & ROUGE-2& ROUGE-SU4\\
   \hline
	\rule{0pt}{2ex} Hum & $0.0897$ & $0.151$ & $0.13260$ & $0.18385$ & $0.17528$ & $0.21892$\\
	\rule{0pt}{2ex} \textbf{MRC} & $\mathbf{0.07864}$ & $0.12824$& $\mathbf{0.10947}$ & $\mathbf{0.16141}$& $\mathbf{0.12745}$ & $\mathbf{0.1792}$\\
1st & $0.07251$ & $\mathbf{0.13163}$ & $0.09558$ & $0.15529$ & $0.12448$ & $0.17711$\\
2nd & $0.07174$ & $0.12972$ & $0.09097$ & $0.14733$ & $0.12028$ & $0.17074$\\
3rd & $0.06984$ & $0.12525$ & $0.08987$ & $0.14755$ & $0.11887$& $0.16999$\\
4th & $0.06963$ & $0.12795$ & $0.08954$ & $0.14607$ & $0.11793$ & $0.17593$\\
Syst. Av.& $0.05842$ & $0.11205$ & $0.07463$ & $0.13021$ & $0.09597$&$0.14884$\\
Basel. & $0.04026$ & $0.08716$ & $0.04947$ & $0.09788$ & $0.06039$&$0.10507$\\
\hline
\end{tabular}}
\end{center}
\caption{Comparison with DUC05, DUC06 and DUC07 systems}
\label{tabCompar}
\end{table}
\FloatBarrier

\section{Conclusion}
In this paper, we proposed a novel query-oriented summarization approach which extracts important and query-relevant sentences of a corpus based on the definition of a fuzzy hypergraph over sentences. Existing graph and hypergraph-based summarizers rely on lexical similarities between sentences, namely relationships of term co-occurrences, which fail to capture semantic similarities. We propose a new system in which semantic relationships between sentences are captured by a probabilistic topic model. The resulting topics are modelled as hyperedges of a fuzzy hypergraph in which nodes are sentences. Sentences are then scored based on their relevance to the query and their centrality in the hypergraph using a fuzzy hypergraph extension of personalized PageRank algorithm. Then, a set of sentences is selected by simultaneously maximizing individual Relevance scores and joint Topical Coverage, which encourages the topical diversity of the resulting summary. Topical Coverage maximization is formulated as a fuzzy extension of dominating set problem. A polynomial time approximation algorithm for sentence selection is provided, based on the theory of submodular functions. The algorithm produces more informative summaries with a better coverage of topics compared to existing systems. Experimental results show that both our topic-based fuzzy hypergraph model and our sentence selection algorithm contribute to an improvement in the content coverage of the summaries, as measured by ROUGE scores. Moreover, a thorough comparative analysis with other graph-based summarizers and summarizers presented at DUC contest demonstrates the superiority of our method in terms of content coverage. As a future research direction, we will investigate how to adapt the model for related tasks including update summarization and community question answering. We will also attempt to incorporate sentence fusion and compression in our fuzzy hypergraph-based method to determine whether topical relationships can help in these tasks.

\section*{References}

\end{document}